\crefname{section}{\S\hspace{-0.2em}}{\S\S\hspace{-0.2em}}
\crefname{subsection}{\S\hspace{-0.2em}}{\S\S\hspace{-0.2em}}
\crefname{subsubsection}{\S\hspace{-0.2em}}{\S\S\hspace{-0.2em}}
\def\eqref#1{Eq.~(\ref{#1})}
\def\1{\bm{1}}
\DeclareMathAlphabet{\mathsfit}{\encodingdefault}{\sfdefault}{m}{sl}
\SetMathAlphabet{\mathsfit}{bold}{\encodingdefault}{\sfdefault}{bx}{n}
\DeclareMathOperator*{\argmax}{arg\,max}
\DeclareMathOperator*{\argmin}{arg\,min}
\newcommand{\Ebb}{\mathbb{E}}
\newcommand{\Rbb}{\mathbb{R}}
\newcommand{\Dcal}{\mathcal{D}}
\newcommand{\Contextspace}{\mathcal{X}}
\newcommand{\Actionspace}{\mathcal{A}}
\newcommand{\bestpolicy}{\hat{\pi}_T}
\newcommand{\ainset}{a \in \Actionspace}
\newcommand{\lcbr}{\underline{f_r^t}}
\newcommand{\ucbr}{\overline{f_r^t}}
\newcommand{\bonus}{\beta_t^{(r)}}
\newcommand{\lastbonus}{\beta_T^{(r)}}
\newcommand{\kernel}{\kappa}
\newcommand{\borda}{f_r}
\newcommand{\algnm}{AE-Borda}
\newcommand{\algnmsp}{{\algnm} }
\newcommand{\dpoae}{AE-DPO}
\newcommand{\uniform}{Uniform-Borda}
\newcommand{\uucb}{UCB-Borda}
\newcommand{\usdpo}{US-DPO}
\newcommand{\xinset}{x \in \Contextspace}
\DeclareMathOperator{\Bernoulli}{Bern}
\newcommand{\preferenceMatrix}{f}
\newcommand{\subopt}{\operatorname{SubOpt}}
\newcommand{\RKHS}{\mathcal{H}}
\newcommand{\dpoloss}{\mathcal{L}_{\text{DPO}}}
\newcommand{\sftpolicy}{\pi_{\text{SFT}}}
\newcommand{\rewub}{\overline{r}}
\newcommand{\rewlb}{\underline{r}}
\newcommand{\winner}{w}
\newcommand{\Domain}{\mathcal{D}}
\definecolor{nicegreen}{RGB}{91,226,91}
\newcommand{\add}[1]{#1}
\newcommand{\linkfunction}{\add{\rho}}
\let\cite\citep
\definecolor{Gray}{gray}{0.95}
\definecolor{Cornsilk}{rgb}{1.0, 0.97, 0.86}
\renewcommand{\max}{\mathrm{max}}
\renewcommand{\exp}{\mathrm{exp}}
\theoremstyle{plain}
\newtheorem{theorem}{Theorem}[section]
\newtheorem{lemma}[theorem]{Lemma}
\theoremstyle{definition}
\newtheorem{assumption}[theorem]{Assumption}
\theoremstyle{remark}
\newcommand{\genborda}[1]{f_r^{#1}}
\newcommand{\ucbgenborda}[1]{\overline{\genborda{#1}}}
\newcommand{\lcbgenborda}[1]{\underline{\genborda{#1}}}
\renewcommand\AB@affilsepx{ \protect\Affilfont}
  \newlength{\defbaselineskip}
\titlespacing{\section}{0pt}{*1}{*0}
\titlespacing{\subsection}{0pt}{*1.5}{*0}
\def\setstretch#1{\renewcommand{\baselinestretch}{#1}}
\title{Sample Efficient Preference Alignment in LLMs\\via Active Exploration}
    \author[$^{\star 1}$]{Viraj Mehta}
    \author[$^{\star 2}$]{Syrine Belakaria}
    \author[$^3$]{Vikramjeet Das}
    \author[$^3$]{Ojash Neopane}
    \author[$^4$]{Yijia Dai}
    \author[$^5$]{Ilija Bogunovic}
    \author[$^2$]{Barbara Engelhardt}
    \author[$^2$]{Stefano Ermon}
    \author[$^3$]{Jeff Schneider}
    \author[$^6$]{Willie Neiswanger}
    \affil[ ]{\hspace{12mm}$^1$TensorZero, $^2$Stanford, $^3$CMU, $^4$Cornell, $^5$UCL, $^6$USC\newline}
    \date{}
    \author{}
    \date{}
\begin{document}

\maketitle
\begingroup
    \renewcommand\thefootnote{}
    \footnotetext{$^{\star}$Equal contribution. \hspace{5mm}Author email: \texttt{viraj@tensorzero.com}, \texttt{syrineb@stanford.edu}, \texttt{\{vdas,oneopane\}@cs.cmu.edu}, \texttt{yd73@cornell.edu},
    \texttt{i.bogunovic@ucl.ac.uk}, \texttt{\{ermon, bengelhardt\}@stanford.edu}, \texttt{schneide@cs.cmu.edu}, \texttt{neiswang@usc.edu}}
\endgroup

\vspace{-10mm}
\begin{abstract}
Preference-based feedback is important for many applications in machine learning where evaluation of a reward function is not feasible. Notable recent examples arise in preference alignment for large language models, including in reinforcement learning from human feedback (RLHF) and direct preference optimization (DPO). For many applications of preference alignment, the cost of acquiring human feedback can be substantial. In this work, we take advantage of the fact that one can often choose contexts at which to obtain human feedback to most efficiently identify a good policy, and formalize the setting as an \emph{active contextual dueling bandit} problem. We propose an active exploration algorithm to efficiently select the data and provide theoretical proof that it has a polynomial worst-case regret bound. We extend the setting and methodology for practical use in preference alignment of large language models. We provide two extensions, an online and an offline approach. Our method outperforms the baselines with limited samples of human preferences on several language models and four real-world datasets including two new datasets that we contribute to the literature. 
\end{abstract}
\vspace{3mm}
\section{Introduction}
\label{introduction}

The alignment of foundation models with user preferences has gained unprecedented importance due to the widespread utilization of large language models (LLMs).
The established pipeline for alignment in LLMs, as outlined in \citet{stiennon2020} and \citet{ouyang2022training}, comprises two steps given a pretrained LLM.
First, in the Supervised Fine-Tuning (SFT) phase, the LLM undergoes fine-tuning via supervised learning with examples demonstrating the desired behavior.
In the second step, Reinforcement Learning from Human Feedback (RLHF), a policy generates multiple completions for each conversation prefix (prompt) in a training set;
users then give ordinal preferences for the set of completions from a particular prompt.
These preferences are used to train a \textit{reward model} via a ranking loss like the Bradley-Terry-Luce model \citep{bradley1952rank}.
Finally, the policy is trained, typically via Proximal Policy Optimization \citep{schulman2017proximal}, to optimize the reward model while not moving too far from the SFT-trained policy.
More recent work \cite{dpo}, proposed an alternative to RLHF, Direct preference Optimization (DPO), that enables training the LLM policy directly on preference data without using RL and a proxy reward model.

As LLMs continue to scale and their areas of application broaden, the number of topics on which we need to align increases, as does the overall scale of human-generated training data requirements.
Data annotation for preference-based learning is already incurring a considerable cost for companies that train LLMs.
This cost is likely to grow alongside the industry.
The issue becomes especially acute for LLMs in specialized areas such as safety, health, and scientific problems, where the cost of expert feedback can be substantial.

In this work, we take advantage of the fact that we have control over which prompts and completions we provide to human experts to make efficient use of their efforts.
Drawing on recent advancements in active exploration for reinforcement learning \citep{li2023nearoptimal} and in black-box optimization \citep{xu2020zeroth}, we introduce a method for assessing the value of collecting preferences on specific datapoints, which is both prospective and task-focused.
First, we formalize this setting as a \emph{dueling contextual bandit problem} and design an efficient active exploration algorithm that offers polynomial worst-case sample complexity guarantees regarding the policy's performance. 
Next, we extend these ideas to the alignment setting in LLMs. We show that choosing data for training LLM policies on expert preferences can be targeted by active learning, leading to efficient use of resources under restrictive budgets.
In this paper, we build atop the DPO methodology \citep{dpo}, and develop an acquisition strategy that allows us to actively select preference data based on the DPO training objective.
We provide two extensions to our active exploration strategy: the first allows an online learning approach, where data selection and training are based on the model's generations, while the second enables the data selection from offline existing preference data. 

We evaluate our methods on four datasets: the Stanford Human Preferences dataset \citep{pmlr-v162-ethayarajh22a}, the Anthropic Helpful-Harmless dataset \citep{bai2022training}, and two additional datasets which we contribute to the literature: Jeopardy! dataset and Haikus dataset. The Jeopardy! dataset is an extension of an existing dataset from the game show Jeopardy!. It is composed of questions and factual answers to evaluate the ability of an alignment method to avoid hallucinations. The Haikus dataset is composed of instruction prompts to write Haikus with specific details and corresponding examples of satisfactory Haikus. We use three LLMs with different sizes---GPT-2 \cite{radford2019language}, Pythia-2.8B~\cite{biderman2023pythia}, and Llama-3-8B~\cite{llama3modelcard}---to showcase a wide range of results and generalization ability. 
Our full contributions are:
\begin{itemize}[leftmargin=5mm, parsep=0mm, topsep=0mm]
    \item We formalize the problem of preference data selection as a dueling contextual bandit problem and propose an active exploration algorithm to solve it. We provide a theoretical analysis on the regret bound of our method.
    \item We propose two extensions of our approach to the LLM preference alignment setting: one using online data and another taking advantage of offline data.
    \item We contribute two new datasets to the literature: Jeopardy! dataset and Haikus dataset.
    \item With extensive evaluation, we find that our methods can boost performance by over 13\% relative to baseline methods when performing preference alignment with a restricted human-feedback sample budget and that they outperform the baselines at avoiding hallucinations on our Jeopardy!\ dataset. The implementation of our methods, baselines and datasets is available in our code (\url{https://github.com/belakaria/active-llm-alignment})
\end{itemize}
\section{Related Work}
\label{relatedwork}

\paragraph{Learning from Comparative Feedback}
Reinforcement learning from comparative human feedback has been studied for more than a decade, including work by \citet{furnkranzPreferenceRL}, \citet{akhourRobustPreferenceRL} and, notably, \citet{deepRLFromHumanPreferences}, which enabled sample-efficient collection of human feedback for deep reinforcement learning (RL) by training a reward model as the RL target.
In Appendix~\ref{a:relatedwork} we give a thorough discussion of human feedback in RL and, more recently, LLMs.
However, while effective, using human preference feedback comes with substantial costs, which is reflected in state-of-the-art work.
For example, \citet{ouyang2022training} emphasize RLHF to improve the alignment of GPT-3 across aspects such as toxicity, hallucinations, and overall quality. Here, the team enlisted the efforts of 40 labelers and worked with a dataset comprising over 100,000 examples labeled by humans.

\vspace{-1mm}
\paragraph{Dueling Bandits}
The bandit literature has also explored the effectiveness of comparative feedback---for example, in the  ``dueling bandit'' setting. This was first studied by \citet{yue2012} in settings where comparative information is relatively easy to extract but absolute rewards (\textit{i.e.}, direct queries) are ill-defined and have no absolute scale. Later, \citet{bengs2021preferencebased} surveyed methods that used online learning, where the trade-off with cost of information is most acute, including those used in the online contextual dueling bandit setting by \citet{dudk2015contextual}.
These constraints motivate a kernelized approach that can incorporate the nonlinearities in the models used in practice.

\vspace{-1mm}
\paragraph{\add{Active} Contextual Bandit Optimization}
When there exist distinct phases of learning and then deployment, an agent can often take steps for improved sample efficiency.
For example, in a contextual bandit setting, \citet{char_ocbo} consider the problem where at test time the goal is to perform well on average across a context distribution, while in the learning phase the goal is to choose both contexts and actions for best performance at test-time.
The authors propose a multi-task version of Thompson sampling during the training phase. We extend this setting from cardinal to ordinal rewards as is appropriate for comparative feedback.

In \citet{li2023nearoptimal}, the agent queries contexts where the value function is most uncertain and acts optimistically. Combined with least-squares value iteration, this method leads to provable polynomial-sample convergence in the worst-case error of the value function estimate in reinforcement learning in general, and as a corollary the setting from \citet{char_ocbo} as a special case. This sets the foundation that we will adapt to the comparative feedback~setting.

In Appendix~\ref{a:relatedwork}, we discuss additional related work, including alternative contextual bandit methods, uncertainty estimation in large language models, and concurrent work on active selection of data in LLMs.
\section{Problem Setting}
\label{s:problem_setting}

In this paper, we consider a dueling variant of \add{what we denote the Active} Contextual Dueling Bandit (ACDB) problem introduced in \citet{char_ocbo}.
An instance of this problem is defined by a tuple $(\Contextspace, \Actionspace, \preferenceMatrix)$ where $\Contextspace$ denotes the context space, $\Actionspace$ denotes the action space and $\preferenceMatrix: \mathcal X \times \mathcal A \times \mathcal A \rightarrow [0, 1]$ is a preference function so that $\preferenceMatrix(x, a, a')$ denotes the probability that the action $a$ is preferred to the action $a'$ when the underlying context is $x$.
We also define a domain $\Domain = \Contextspace \times \Actionspace$.
We will design algorithms that operate under the following interaction protocol, which occurs for $T$ time steps.
During each time step $t \in [T]$, the agent selects a context $x_t \in \Contextspace$ and a pair of actions $a_t, a_t' \in \Actionspace$ and observes a binary random variable $\winner_t \sim \Bernoulli(\preferenceMatrix(x_t, a_t, a_t'))$ which equals one if $a_t$ is preferred to $a_t'$ and zero otherwise.

\begin{figure*}
    \centering
    \includegraphics[width=0.9\textwidth]{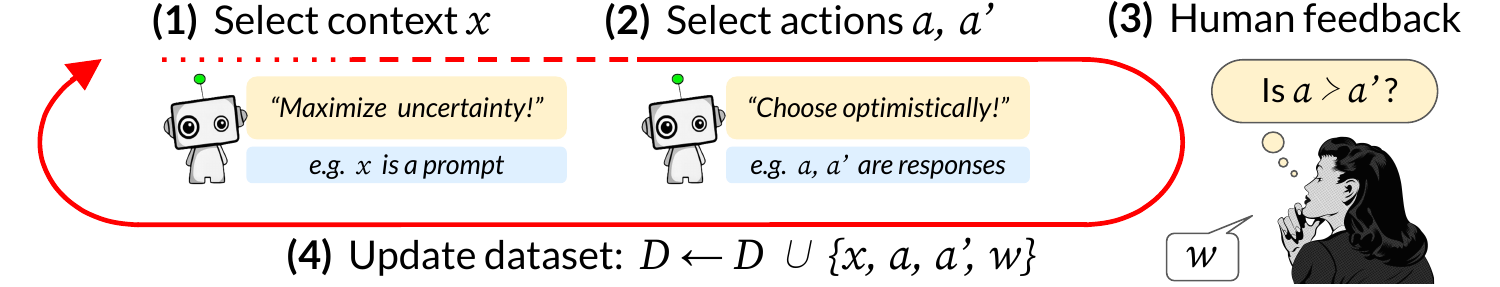}
    \vspace{-1mm}
    \caption{\small Illustration of the \add{active} contextual dueling bandit setting, and its application to sample-efficient preference alignment in large language models.}
    \label{fig:overview_diagram}
\end{figure*}

We assume that the preference function has the form
\begin{equation}
    \label{eq:preference_function}
    \preferenceMatrix(x, a, a') = \linkfunction\left( r(x, a) - r(x, a') \right),
\end{equation}
where $\linkfunction : \Rbb \rightarrow [0, 1]$ is the \emph{link function} and $r: \Domain \rightarrow \Rbb$ is the \emph{unknown} reward function.
Common link functions include the logistic function, which leads to the Bradley-Terry-Luce (BTL) model \citep{bradley1952rank} as well as the Gaussian CDF \citep{thurstone1927method}.
We also place additional assumptions on the reward function for our theoretical analysis in the kernelized setting (details in Section~\ref{s:kocbd}).

Our goal is to design algorithms that are able to efficiently identify policies with a small suboptimality gap. We define the suboptimality gap of a learner's policy $\pi: \mathcal X \to \mathcal A$ as
\begin{equation}
    \subopt(\pi) = \sup_{x \in \Contextspace} \left( \sup_{a \in \Actionspace} r(x, a) - r(x, \pi(x)) \right).
\end{equation}
This notion of suboptimality (considered in \citet{char_ocbo} and \citet{li2023nearoptimal}) is stronger than notions that look at the expected suboptimality of the final policy when the contexts are sampled from some known distribution. In this work we also use this suboptimality, which looks at the worst-case context for each policy. For the kernelized and LLM settings we address in Section~\ref{s:kocbd} and Section~\ref{s:llm} respectively, we will provide an explicit instantiation of this problem setting.
\section{Active Exploration in the Kernelized Setting}
\label{s:kocbd}

In this section, we describe our first contribution—a theoretically principled algorithm for the ACDB problem—and provide formal guarantees on its performance.
To provide the theoretical guarantees, we need to first instantiate our general problem setup by making assumptions on the preference function $\preferenceMatrix$ (from ~\eqref{eq:preference_function}).
In particular, we specify a class of functions that contain the true unknown reward function.
This choice is subtle, as we need to balance the trade-off between our function class's expressiveness and theoretical tractability.
Motivated by its theoretical popularity and empirical success, we choose the function class to be a Reproducing Kernel Hilbert Space.
While this choice of function class is common in the literature, we make a few additional assumptions to more appropriately accommodate our problem setting.

\paragraph{The Contextual Borda Function} Before going over our assumptions, we first introduce the \emph{contextual Borda function} $\borda$, which is core to our algorithm.
The contextual Borda function generalizes the Borda function introduced in \citet{xu2020zeroth} for dueling-choice optimization, defined as the probability that an action $a$ will be preferred over a random action $a'$ uniformly sampled from the action space.
We generalize this definition to the contextual setting as $\borda: \Domain \to [0, 1]$, where $\borda(x, a) = \Ebb_{a'\sim U(\Actionspace)}\left[\preferenceMatrix(x, a, a')\right]$ and $U(\Actionspace)$ is the uniform measure over the action space.
It is clear from this definition that $f_r$ and $r$ have the same maximizers.

We now discuss our assumptions. Our first assumption restricts the reward and contextual Borda functions to be `smooth' in a Reproducing Kernel Hilbert Space (RKHS). Our second assumption relates the reward function to the contextual Borda function.
\begin{assumption}\label{ass:norm}
    Let $\kernel: \Domain \times \Domain \to \Rbb$ denote a positive semi-definite kernel and let $\RKHS_\kernel$ denote its associated RKHS. We assume that $ \left \lVert r \right \rVert_{\kernel}, \left \lVert \borda \right \rVert_{\kernel} \leq B$, where $B$ is a known constant.
\end{assumption}
Note that this assumption is stronger than the standard assumption, which only requires that $r$ has a bounded RKHS norm.
It is difficult to bound the norm of $f_r$ given a bound on the norm of $r$ due to the generality of our setting, which allows for different link functions.
We investigate this issue numerically in Appendix~\ref{s:rkhs_borda}. We find that the norm of the Borda function is almost always smaller than the norm of the reward function for samples drawn from the distribution of basis functions used for experiments in Section~\ref{s:kocbd_experiments}.

\begin{assumption}
    \label{ass:borda}
    Let $\borda^*(x) = \max_a \borda(x, a)$ and $r^*(x) = \max_a r(x, a)$. There exists a constant $L_1$ such that for every $x \in \Contextspace$, $a \in \Actionspace$ we have $\frac{1}{L_1}(r^*(x) - r(x, a)) \leq \borda^*(x) - \borda(x, a)$.
\end{assumption}
This assumption implies that differences in $r$ will cause a similar magnitude of difference in $f_r$.
In fact, when the link function $\linkfunction(\cdot)$ is Lipschitz continuous, it is sufficient for its Lipschitz constant to be at least $1/L_1$ for this condition to hold.
\add{We note that this assumption holds for the two most commonly used link functions, the logistic function \citep{bradley1952rank} and the Gaussian CDF \citep{thurstone1927method}.}

\subsection{Methods}
\label{s:kocbd_methods}

At a high level, our approach reduces the dueling feedback problem to contextual optimization over a single action via the \emph{contextual Borda function} introduced above.
We then apply techniques adapted from recent work on active exploration in reinforcement learning to construct a sampling rule and a policy selection rule, which allow us to output a policy with low suboptimality.
Broadly, our sampling rule draws contexts which have maximum uncertainty over the Borda `value function' and then compares the optimistic action with an action sampled uniformly from the action set.

\paragraph{Estimating the Contextual Borda Function}
By design, we can estimate the contextual Borda function using preference data $\{x_t, a_t, a'_t, \winner_t\}$ by selecting $x_t, a_t$ in an arbitrary fashion and sampling $a'_t$ uniformly at random.
For low dimensional settings, our algorithm first estimates the contextual Borda function using standard kernelized ridge regression (KRR) \citep{rasmussen2006gaussian}. We refer the reader to Appendix~\ref{a:rkhs_regression} for an explicit description of the KRR regression procedure.
In Section~\ref{s:llm}, we explore modifications of our methods for higher-dimensional settings, such as in the case of LLMs.
One key feature of KRR is that it provides both an estimate of the contextual Borda function after $t$ observations, $\mu_t(x, a)$, as well as uncertainty quantification of the predictions.
Indeed, under Assumptions~\ref{ass:norm}~and~\ref{ass:borda} we can show that $|f_r(x, a) - \mu_t(x, a)| \leq \beta \sigma_t(x, a)$ for an appropriately chosen $\beta$ and $\sigma_t(x, a)$ (see Lemma \ref{lem:confidence_bounds}).\looseness=-1

\paragraph{Selecting Contexts and Actions}
Our sampling rule builds on top of the one established in \citet{li2023nearoptimal}. Put simply, the rule is to sample the state with the maximum uncertainty over the value function and then act optimistically.
We now present our algorithm, which extends these ideas to the dueling setting via the contextual Borda function $f_r$.

For now, we assume that there is a known bonus term $\bonus$ for all $t$.
We can then define upper and lower confidence bounds $\ucbr(x, a) = \mu_t(x, a) + \bonus \sigma_t(x, a)$ and $\lcbr(x, a) = \mu_t(x, a) - \bonus \sigma_t(x, a)$.
Our rule is to select a context
\begin{equation}
\label{eq:context_selection}
\begin{aligned}
    x_t \in \argmax_{\xinset}\left(\max_{\ainset} \ucbr(x, a) - \max_{\ainset}\lcbr(x, a)\right).
\end{aligned}
\end{equation}
Here, we are choosing a context that maximizes the difference between the optimistic `value function' and the pessimistic `value function' (both of which require a maximization over actions to compute).
We then optimistically choose an action
\begin{equation}
\label{eq:action_selection}
    a_t \in \argmax_{\ainset}\ucbr(x_t, a).
\end{equation}
After repeating this process $T$ times, we output a pessimistic policy against the tightest lower bound we can find, which is the maximizer of all our lower bounds through the optimization process.
Put formally, we return $\bestpolicy: \Contextspace \to \Actionspace$ such that
\begin{equation}
\label{eq:bestpolicy}
    \bestpolicy(x) \in \argmax_{a \in \Actionspace}\max_{t \leq T}\lcbr(x, a).
\end{equation}
From these pieces we construct the full active exploration algorithm, AE-Borda, given in Algorithm~\ref{alg:Borda-AE}.
\begin{algorithm}[t!]
    \caption{\algnm}
    \label{alg:Borda-AE}
    \begin{algorithmic}[1] 
        \STATE {\bfseries Input:} kernel function $\kernel(\cdot, \cdot)$, exploration parameters $\bonus$, number of inital data $n_0$
        \STATE Let $D_{n_0} = \{x_i, a_i, a'_i, \winner_i\}_{i=1}^{n_0}$ for $x_i, a_i, a'_i$ drawn uniformly at random.
        \FOR {$t=n_0 + 1,\dots,T$}
            \STATE Compute $\mu_t(\cdot, \cdot)$, $\sigma_t(\cdot, \cdot)$ using KRR.
            \STATE Choose $x_t$ according to \eqref{eq:context_selection}.
            \STATE Choose $a_t$ according to \eqref{eq:action_selection}, draw $a'_t\sim U(\Actionspace)$, and draw $\winner_t \sim \Bernoulli(\preferenceMatrix(x_t, a_t, a_t'))$.
            \STATE Let $D_t = D_{t-1} \cup \{(x_t, a_t, a'_t, \winner_t)\}$.
        \ENDFOR
        \STATE Output a final policy $\bestpolicy$ according to \eqref{eq:bestpolicy}.
    \end{algorithmic}
\end{algorithm}

\subsection{Theoretical Analysis}
\label{s:kocdb_analysis}
In this section, we provide our algorithm's theoretical guarantees. We first introduce the \emph{maximal-information gain}, which plays an important role in our results.
The maximum information gain over $t$ rounds, denoted $\Phi_t$, is defined as
\begin{equation}
    \Phi_t = \max_{A \subset \Contextspace\times\Actionspace: \left \lvert A \right\rvert = t} I(r_A + \epsilon_A; r_A),
\end{equation}
where $r_{A} = \left[ r(x) \right]_{x \in A}$ , $\epsilon_A \sim N(0, \eta^2 I)$, and $I(X; Y) = H(X) - H(X | Y)$ is the mutual information.
With this definition, we are now ready to state our result.
\begin{theorem}
    \label{thm:regret}
    Suppose we run Algorithm~\ref{alg:Borda-AE} with
    \begin{equation}
        \beta^{(r)}_t = 2B + \sqrt{2 \Phi_t + 1 + \log \left( \frac 2 \delta \right)},
    \end{equation}
    then, with probability at least $1 - \delta$, we have that
    \begin{equation}
        \subopt(\hat \pi_T) \leq O \left(  \frac{L_1}{\sqrt{T}} \left(B + \Phi_T\sqrt{\log \frac{1}{\delta}} \right)\right).
    \end{equation}
\end{theorem}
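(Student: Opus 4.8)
The plan is to bound $\subopt(\bestpolicy)$ in three stages: reduce suboptimality measured in $r$ to suboptimality measured in the contextual Borda function $\borda$, control the latter using the optimism/pessimism structure of the algorithm, and finally convert the resulting sum of posterior widths into the information-gain quantity $\Phi_T$. The first stage is immediate from Assumption~\ref{ass:borda}: applying it with $a=\bestpolicy(x)$ gives $r^*(x)-r(x,\bestpolicy(x)) \leq L_1(\borda^*(x)-\borda(x,\bestpolicy(x)))$ for every $x$, and taking the supremum over $\Contextspace$ yields $\subopt(\bestpolicy) \leq L_1 \sup_{\xinset}(\borda^*(x)-\borda(x,\bestpolicy(x)))$. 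This isolates the factor $L_1$ and reduces everything to bounding the Borda suboptimality of the output policy.

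Next I would condition on the high-probability event of Lemma~\ref{lem:confidence_bounds}, which holds with probability $1-\delta$ and gives $\lcbr(x,a) \leq \borda(x,a) \leq \ucbr(x,a)$ simultaneously for all $t,x,a$. Define the per-round exploration gap $w_t = \max_{\ainset}\ucbr(x_t,a) - \max_{\ainset}\lcbr(x_t,a)$; by the context-selection rule \eqref{eq:context_selection}, $x_t$ maximizes this gap over contexts. The heart of the argument is a sandwich: fix any $x$ and any round $t$. Optimism gives $\borda^*(x) \leq \max_{\ainset}\ucbr(x,a)$, while the pessimistic output rule \eqref{eq:bestpolicy} together with the lower bound gives $\borda(x,\bestpolicy(x)) \geq \max_{t\leq T}\lcbr(x,\bestpolicy(x)) = \max_{\ainset}\max_{t\leq T}\lcbr(x,a) \geq \max_{\ainset}\lcbr(x,a)$. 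Subtracting and using the optimality of $x_t$ in \eqref{eq:context_selection}, I obtain $\borda^*(x)-\borda(x,\bestpolicy(x)) \leq \max_{\ainset}\ucbr(x,a) - \max_{\ainset}\lcbr(x,a) \leq w_t$ for every $t$. Since the right side no longer depends on $x$, this gives $\sup_{\xinset}(\borda^*(x)-\borda(x,\bestpolicy(x))) \leq \min_{t\leq T} w_t \leq \frac{1}{T}\sum_{t} w_t$.

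It then remains to control $\sum_t w_t$. Because the LCB-maximizing action dominates the optimistic action $a_t$ under the lower confidence bound, each gap collapses to a single width: $w_t = \ucbr(x_t,a_t) - \max_{\ainset}\lcbr(x_t,a) \leq \ucbr(x_t,a_t)-\lcbr(x_t,a_t) = 2\bonus\sigma_t(x_t,a_t)$. Using that $\bonus$ is nondecreasing in $t$ and Cauchy--Schwarz, $\sum_t w_t \leq 2\lastbonus\sum_t \sigma_t(x_t,a_t) \leq 2\lastbonus\sqrt{T\sum_t \sigma_t^2(x_t,a_t)}$, and the standard kernelized bound $\sum_t \sigma_t^2(x_t,a_t) = O(\Phi_T)$ gives $\sum_t w_t = O(\lastbonus\sqrt{T\Phi_T})$. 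Dividing by $T$ and reinserting the factor $L_1$ produces a bound of the form $O(L_1\lastbonus\sqrt{\Phi_T}/\sqrt{T})$, which reduces to the stated rate after substituting $\lastbonus = 2B+\sqrt{2\Phi_T+1+\log(2/\delta)}$ and collecting the $B$, $\Phi_T$, and $\log(1/\delta)$ contributions via subadditivity of the square root.

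I expect the main obstacle to be the sandwich step combined with the min-over-time bound: one must exploit the asymmetry between the optimistic exploration (which selects $x_t$ and $a_t$) and the pessimistic policy output, and argue that the suboptimality at \emph{every} context is simultaneously controlled by the best (smallest) round gap, not merely the final one — it is this that turns a pointwise confidence guarantee into a $1/\sqrt{T}$ rate. The remaining pieces — the concentration underlying Lemma~\ref{lem:confidence_bounds} (which also explains the $2B$ offset and the $\Phi_t$ dependence in $\bonus$, and is where Assumption~\ref{ass:norm} on $\|\borda\|_\kernel$ is needed) and the variance-to-information-gain bound — are standard, and the exact bundling into the stated $O(B+\Phi_T\sqrt{\log(1/\delta)})$ form is routine but requires care.
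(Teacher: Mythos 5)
Your proposal is correct and follows essentially the same route as the paper's proof: the reduction from $r$ to $\borda$ via Assumption~\ref{ass:borda}, conditioning on the confidence event of Lemma~\ref{lem:confidence_bounds}, exploiting the pessimistic output rule \eqref{eq:bestpolicy} together with the selection rules \eqref{eq:context_selection}--\eqref{eq:action_selection} to bound the suboptimality by the minimum (hence average) per-round gap $2\bonus\sigma_t(x_t,a_t)$, and finishing with Cauchy--Schwarz plus the information-gain bound of Lemma~\ref{lem:convergence}. Your pointwise ``sandwich for each fixed $t$'' is just a transparent restatement of the paper's max-swap step, so the two arguments coincide.
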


\paragraph{Proof Overview.}
First, we use standard results on KRR to show that our choice of $\beta^{(r)}$ guarantees that our confidence bands contain $f^r(x, a)$ with high probability simultaneously for all $t$ and $x, a \in \Contextspace \times \Actionspace$.
Next, we use Assumption~\ref{ass:borda} to show that the suboptimality of the pessimistic policy induced by our estimated contextual Borda function is small whenever we are able to estimate the contextual Borda function well.
Finally, we conclude the proof by showing that our sampling rule indeed allows us to estimate the contextual Borda function well.
The full proof can be found in Appendix~\ref{thm:regret}.

\vspace{-1mm}
\paragraph{Concrete Performance Bounds.}
To more concretely understand the performance of our algorithm, we instantiate our results for two commonly studied kernels: the linear and squared-exponential.
For both of these settings, the scaling of the information gain is well known (see for example \cite{srinivas2009gaussian}).
In the linear setting, we have that $\Phi_T = O(d \log T)$ leading to a bound of $O \left( \tfrac{L_1}{\sqrt{T}} \left( d \log T \right) \right)$.
For squared exponential kernels we have $\Phi_T = O \left( \log(T)^{d + 1} \right)$ leading to a suboptimality bound of $O \left( \tfrac{L_1}{\sqrt{T}} \left(\log(T)^{d + 1} \right) \right)$.

When compared to existing results for dueling bandits \citep{xu2020zeroth} as well as standard bandits \citep{chowdhury2017kernelized}, we see that our suboptimality bounds match, thus showing that our algorithm is able to achieve the same performance under a stronger performance metric.
\section{Scaling Active Exploration to Large Language Models}
\label{s:llm}

In order to extend our method to the case where $\Contextspace$ and $\Actionspace$ are both large spaces of sequences as is common in natural language processing, we must address a few \textbf{limitations} of the \algnmsp method presented in Section~\ref{s:kocbd_methods}:
\begin{enumerate}[leftmargin=5mm, topsep=0mm, parsep=1mm]
\item The contextual Borda function $\borda$ as defined above is unsuitable for an action space that is extremely large and where most actions are obviously bad (a uniformly sampled sequence of tokens is trivially distinguishable from natural language).
\item Neural network training proceeds in batches, and it would be highly inefficient to label and train on a single example at a time.
\item The uncertainty estimation tools in sequence modeling are more limited than those for explicitly kernelized models, especially due to the memory constraints in training LLMs.
\end{enumerate}

To address these issues, we specialize our method for the LLM setting.
In particular, we estimate the uncertainty of our policy using dropout for uncertainty estimation \citep{gal2016dropout}, and perform batched subset selection for our training minibatches.
Further, we build atop the foundation presented in \citet{dpo} on Direct Preference Optimization (DPO), which avoids training a separate reward model; this is primarily due to the fact that we prefer to select datapoints based on the estimated uncertainty of the model used for decision making rather than any proxy.

\paragraph{Direct Preference Optimization.}
Direct Preference Optimization \citep{dpo} avoids training a separate reward model based on preferences by instead training the policy directly on pairwise comparison using a loss that optimizes an equivalent objective despite functionally behaving like classification.
As with PPO~\citep{schulman2017proximal}, this loss depends on a reference policy, which
we take to be the policy derived from the supervised fine-tuning step, $\sftpolicy$.
The loss is defined as
$\dpoloss(\pi_\theta; \sftpolicy)$ $=$ $-\Ebb_{(x, a, a', \add{w}) \sim \Dcal}\big[\log\linkfunction \big(\gamma(2\winner-1)\big(\log\frac{\pi_\theta(a\mid x)}{\sftpolicy(a\mid x)} - \log\frac{\pi_\theta(a'\mid x)}{\sftpolicy(a'\mid x}\big)\big)\big]$.
\citet{dpo} also shows that optimizing this objective is equivalent to training a PPO policy with a reward function
\begin{equation}
    \label{eq:dpo_reward}
    r(x, a) = \add{\gamma}\log\frac{\pi_r(a\mid x)}{\sftpolicy(a\mid x)} + \add{\gamma}\log Z(x),
\end{equation}
where $\gamma$ is the hyperparameter of PPO scaling the KL penalty, $Z(x)$ is a partition function, and $\pi_r$ is the policy which optimizes the PPO objective.

\subsection{An Acquisition Function for DPO with Online Selection}\label{onlineAE}
Given the first limitation defined above, we propose a generalized contextual Borda function given by
\begin{equation}
    \label{eq:generalized_borda}
    \genborda{\pi}(x, a) = \Ebb_{a'\sim \pi(x)}\left[p(w\mid x, a, a')\right]
\end{equation}
for a proposal distribution $\pi: \Contextspace\to P(\Actionspace)$.
We note that we can recover the original function by setting $\pi(x) = U(\Actionspace)$. As an extension of the AE-Borda method, we propose the following selection rule for the proposal distribution:
\begin{align}
    \label{eq:context_selection_rule}
    & x_t \in\argmax_{\xinset}\left(\max_{\ainset}\ucbgenborda{\pi}(x, a) - \max_{\ainset}\lcbgenborda{\pi}(x, a)\right)\\
    \label{eq:action_selection_rule}
    & a_t \in \argmax_{\ainset}\ucbgenborda{\pi}(x_t, a).
\end{align}
We can estimate $\genborda{\pi}$, as well as confidence intervals for any fixed $\pi$, using data where $a'$ is sampled from $\pi$. For LLMs $\genborda{\sftpolicy}$ is a natural choice for $\pi$ since it will provide meaningful samples. Consequently, we can estimate $\genborda{\sftpolicy}$ without offline data by asking a labeler to label $(x, a, a')$ where $a' \sim \sftpolicy(x)$ and then fitting $\pi_\theta$ with the DPO loss.
Assuming that we can compute the confidence bounds $\overline{\pi_\theta}(a\mid x)$ and $\underline{\pi_\theta}(a\mid x)$ and using the reward expression in the DPO setting, then we can compute $\genborda{\sftpolicy}$ confidence intervals as follows:
\begin{equation}
    \label{eq:ucbgenbordaestimate}
    \ucbgenborda{\sftpolicy}(x, a) \approx \frac{1}{n}\sum_{i=0}^N \frac{1}{1 + \exp\left(\beta\log\frac{\underline{\pi_\theta}(a'_i\mid x)}{\sftpolicy(a'_i\mid x)} - \beta\log\frac{\overline{\pi_\theta}(a\mid x)}{\sftpolicy(a\mid x)}\right)}
\end{equation}

\begin{equation}
    \label{eq:lcbgenbordaestimate}
    \lcbgenborda{\sftpolicy}(x, a) \approx \frac{1}{n}\sum_{i=0}^N \frac{1}{1 + \exp\left(\beta\log\frac{\overline{\pi_\theta}(a'_i\mid x)}{\sftpolicy(a'_i\mid x)} - \beta\log\frac{\underline{\pi_\theta}(a\mid x)}{\sftpolicy(a\mid x)}\right)}.
\end{equation}

To estimate the confidence intervals of $\pi_\theta$, we need to estimate its uncertainty. Concretely, we need to address the autoregressive nature of $x$ and $a$. We assume $a$ consists of ordered tokens $t_i$ and $\log\pi(a\mid x)$ $=$ $\sum_{t_i \in a}\log\pi(t_i\mid x, t_1, \dots, t_{i-1})$.
In our method, we use dropout for uncertainty quantification.
Specifically, the $m$ dropout masks $d_j$ are integrated into the function $\pi(t_i\mid x, t_1, \dots, t_{i-1}, d_j)$.
During inference, we perform Monte Carlo sampling with dropout, resulting in an ensemble with mean $\add{\mu(t_i\mid x, t_1, \dots, t_{i-1})$ $=$ $\frac{1}{m}\sum_{j\in [m]} \log\pi(t_i \mid x, t_1, \dots, t_{i-1}, d_j)} $.
The standard deviation across this ensemble, $\add{\sigma(t_i \mid x, t_1, \dots, t_{i-1})=\sqrt{\frac{1}{m - 1}\sum_{j \in [m]}\left(\log\pi(t_i \mid x, t_1, \dots, t_{i-1}, d_j)\right)^2}}$, serves as an approximation for the model's epistemic uncertainty.
This technique allows us to capture uncertainty in a computation and memory-efficient manner without compromising model performance.

We then define an acquisition function as
\begin{equation}
    \label{eq:context_selection_dpo_on}
    \alpha(x) = \max_{\ainset}\ucbgenborda{\pi}(x, a) - \max_{\ainset}\lcbgenborda{\pi}(x, a).
\end{equation}

\paragraph{An Algorithm for Active DPO.}
From here, we use the acquisition function in \eqref{eq:context_selection_dpo_on} to choose points that are maximally informative. We do this in batches in order to respect the constraints of training large models. We address this in a straightforward fashion, fetching a batch of a size much larger than our training batch size, evaluating $\alpha$, and then choosing the top-$b$ elements. We refer to our full procedure as AE-Borda-DPO, and show details in Algorithm~\ref{alg:DPOAEBorda}.

\begin{algorithm}[h!]
    \caption{AE-Borda-DPO}
    \label{alg:DPOAEBorda}
    \begin{algorithmic}[1] 
        \STATE {\bfseries Input:} Reference policy $\sftpolicy$, exploration parameter $\beta$, policy constraint weight $\gamma$, batch size $b$, number of iterations $N$
        \FOR {$t=n_0 + 1,\dots,N$}
        \STATE Draw a large batch of contexts $B = \{x_i\} \sim D$.
            \FOR {$x_i \in B$}
            \STATE Sample $N$ actions $a'_{ij} \sim \sftpolicy(x_i)$, $m$ actions $a_{ij} \sim \pi_\theta(x_i)$
            \STATE Estimate $\ucbgenborda{\sftpolicy}(x, a_{ij})$ and $\lcbgenborda{\sftpolicy}(x, a_{ij})$ for each $a_{ij}$ using \eqref{eq:ucbgenbordaestimate} and \eqref{eq:lcbgenbordaestimate}.
            \STATE Compute $\alpha(x_i)$.
            \ENDFOR
            \STATE Let $B_l$ be the top-$b$ elements of $B$ by $\alpha$ value.
            \FOR {$x_i \in B_l$}
            \STATE Choose action $a_{i} \in \argmax_{a_{ij},j\in [m]}\ucbgenborda{}(x_i, a_{ij})$
            \STATE Choose action $a'_{ij}$ uniformly over $j$.
            \ENDFOR
            \STATE Observe preferences labels and add them to $B_l$.
            \STATE Update the policy $\pi_\theta$ using a gradient step against $\dpoloss$ using $B_l$.
        \ENDFOR
        \STATE Output $\pi_\theta$
    \end{algorithmic}
\end{algorithm}

\paragraph{Extensions to Offline Data.}
In some real-world settings, practitioners might have offline datasets $D = \{(x, a, a', w)\}$ with the context and preference actions predefined.
In Appendix~\ref{a:active_offline_dpo} we discuss the limitations in using the AE-Borda-DPO algorithm in this offline setting. We then develop and present an \textit{offline active DPO algorithm} for these cases.
\section{Experiments}
\label{sec:experiments}

We first conduct synthetic experiments to assess the validity of our theory, followed by experiments on LLMs to show the benefits of our method in practice.

\subsection{Experiments in the Kernelized Setting}
\label{s:kocbd_experiments}

In order to assess the validity of our theory we have conducted synthetic experiments that allow us to come as close as possible to the theoretical setting and empirically confirm our results.
To do so, we implement regression using the \textrm{BernoulliGP} model provided by GPyTorch \citep{gardner2018gpytorch}, using a Mat\'ern kernel.

We test on distributions of synthetic reward functions generated by sampling a random linear combination of Random Fourier Features \citep{rahimi_rff} derived from a squared exponential kernel.
For each sampled reward function $r$, we use the Bradley-Terry model with $p(\winner = 1 \mid a, a', x) = \frac{1}{1 + \exp(r(x, a') - r(x, a))}$ to generate comparison data.
For each trial we uniformly sample $n_0=25$ datapoints and then select data to observe until $T=500$ total datapoints were collected, via one of three methods: 

\begin{itemize}[leftmargin=5mm, parsep=0mm, topsep=0mm]
    \item \textbf{\algnm}: our method, described in Section~\ref{s:kocbd_methods}.
    \item \textbf{\uniform}: uniform sampling of both contexts and actions.
    \item \textbf{\uucb}: uniform sampling of contexts, along with UCB actions as in \algnm.
\end{itemize}

This last method reduces to the method presented in \cite{xu2020zeroth} when naively generalized to the contextual setting. All methods have the same test-time behavior of executing the action found by optimizing the pessimistic Borda function estimate for the test context.
By optimizing the ground-truth reward function we were able to approximate the optimal policy and therefore estimate the regret of our policy against it.
We give an example of the progression of our method for 1D context and 1D actions in Figure~\ref{fig:kocbd_viz} as well as a comparison of \uniform~against \uucb~in Figure~\ref{fig:comparison}.
One can see that \algnm~performs best both on median regret and on the maximum regret, which was the metric of interest in our theoretical analysis.\looseness=-1

\begin{figure}[t]
    \centering
    \hspace{-2mm}
    \includegraphics[width=0.35\textwidth]{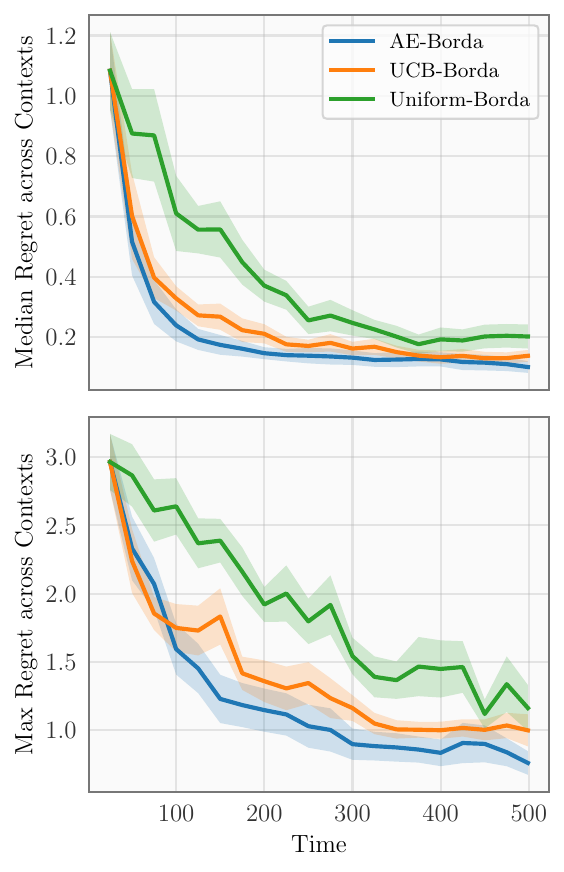}
    \includegraphics[width=0.35\textwidth]{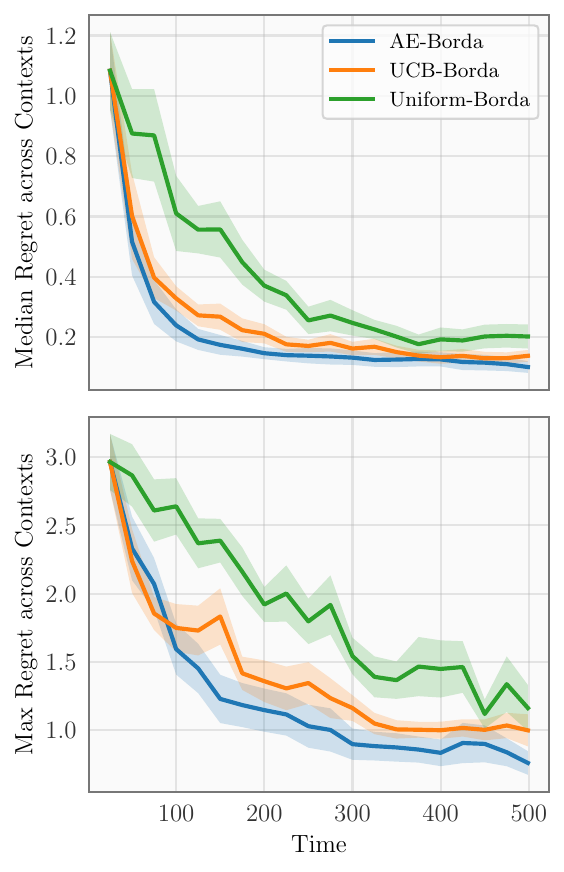}\\
    \hspace{-2mm}
    \includegraphics[width=0.35\textwidth]{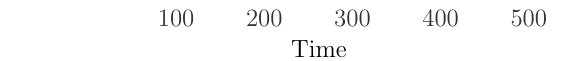}
    \includegraphics[width=0.35\textwidth]{figs/kocbd_results_xaxis.pdf}
    \vspace{-2mm}
    \caption{\small Performance of all methods across 10 random functions $r$ with 1D contexts and 1D actions. The left plot shows the median regret across contexts and the right shows the maximum. Error bands show one standard error.}
    \label{fig:comparison}
\end{figure}

It is clear in Figure~\ref{fig:kocbd_viz} that the method is quickly able to concentrate samples in regions that could plausibly be the optimum and that the peaks in the acquisition function over contexts are sensible given the mean and uncertainty estimates of $f_r$. We give a larger set of results showing the progression of \algnm\ in Sec.~\ref{a:kocdb_addtl_experiments}.
\subsection{Experiments using LLMs}
\label{s:dpo_experiments}

To evaluate whether our method is able to improve the selection of data points in DPO, we conduct a set of experiments in which we train LLMs on four datasets using three different models of varying sizes. The goal of our empirical study is to see whether improving the data selection strategy causes the downstream policy to perform better on a given training task. In order to isolate the effect of the data selection method, we varied the selection method while largely keeping our models and training procedure consistent.
In all the experiments in this section, we compare two methods: \textbf{AE-Borda-DPO}, the method we presented in Section~\ref{onlineAE}, and \textbf{Uniform-DPO}, the method from \citet{dpo}, selecting batches of contexts and their actions uniformly at random.

In our training pipeline, we first train a baseline model using supervised fine-tuning (SFT) on all the supervised data.
We add a dropout layer before the penultimate linear layer for our uncertainty estimation mechanism and fine-tune it with the dropout activated.
Next, we train each of our model-dataset pairs on a fraction of the prompts while generating the preference samples from the policy and collecting the preference label using an oracle. We use GPT-4o-mini as our training-time oracle. 

\begin{figure}[t]
    \centering
    \hspace{-2mm}\includegraphics[width=0.58\columnwidth]{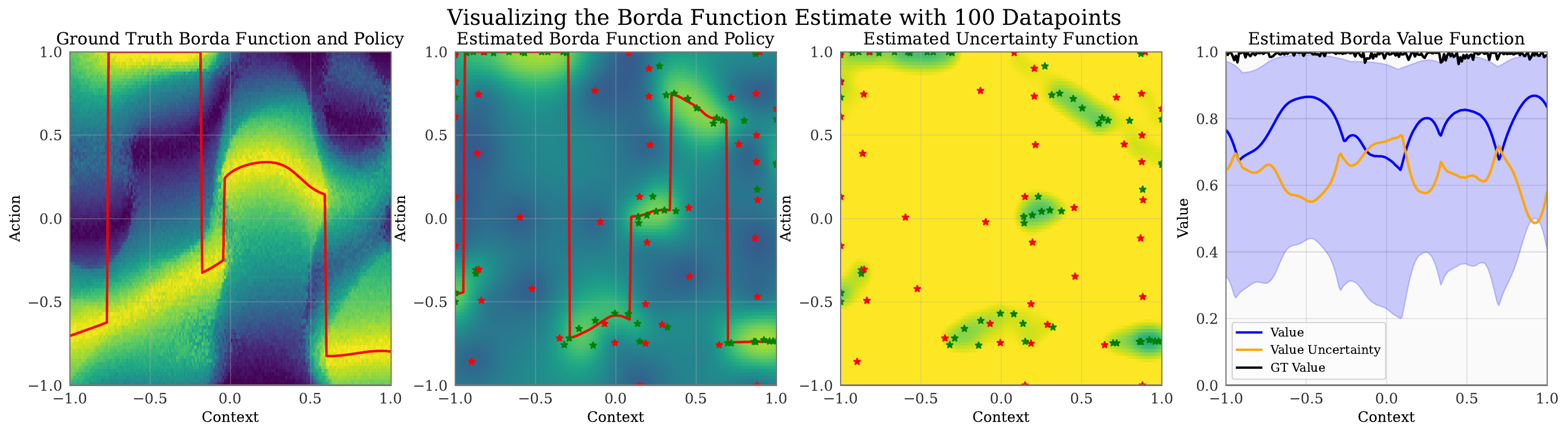}
    \caption{\small Visualizing the Borda function estimate with 100 data points. Left: the ground truth contextual Borda function $f_r$ (red line is the optimal policy). Right: the mean of our posterior estimate of $f_r$ (red line is the best policy estimate). Red dots are queries where $w_t = 0$ and green are where $w_t = 1$. For a full version, see Fig.~\ref{a:kocdb_addtl_experiments}.}
    \label{fig:kocbd_viz}
\end{figure}

\begin{figure*}[h!]
    \centering
    \includegraphics[width=0.24\textwidth]{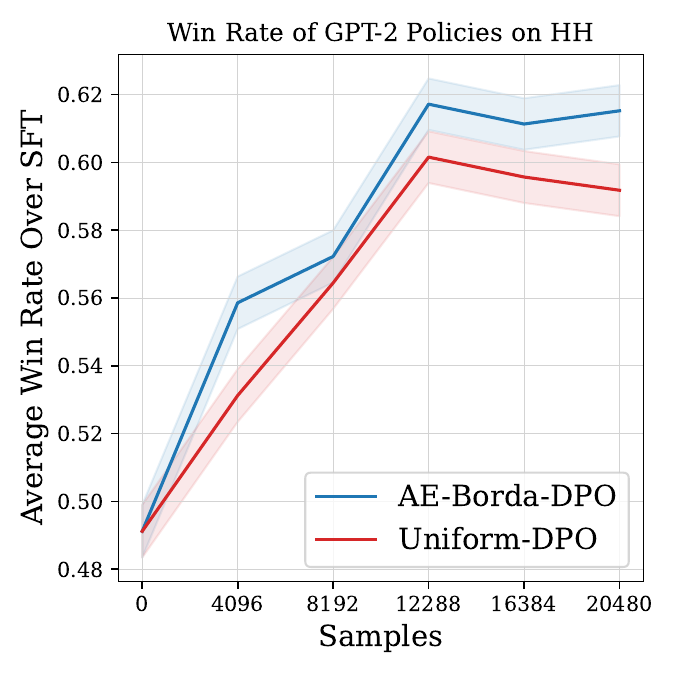}
    \includegraphics[width=0.24\textwidth]{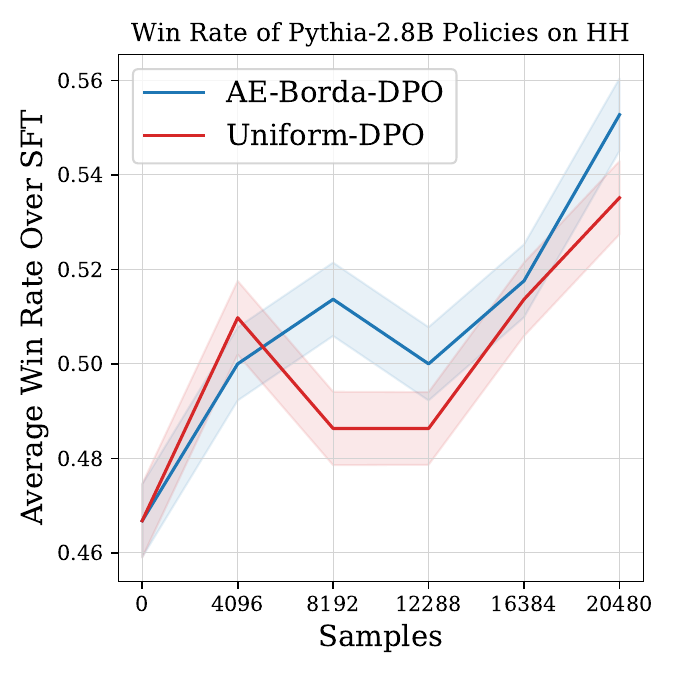}
    \includegraphics[width=0.24\textwidth]{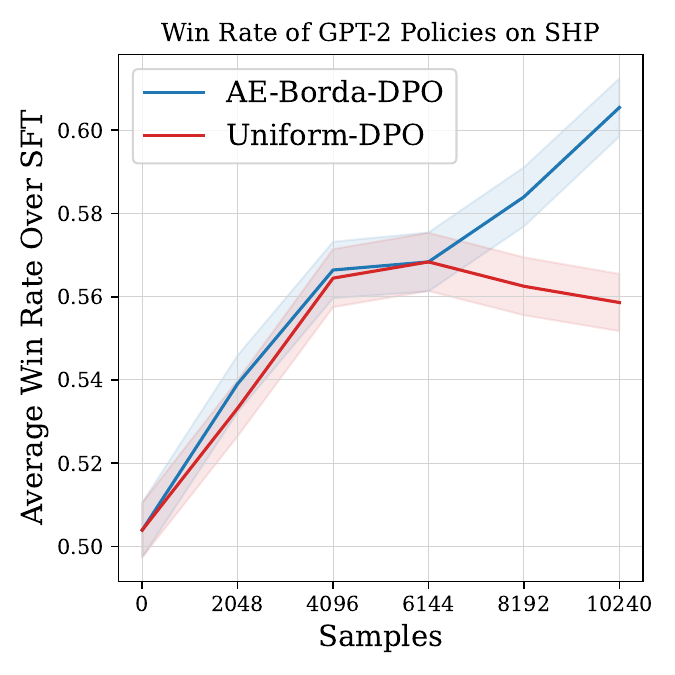}
    \includegraphics[width=0.24\textwidth]{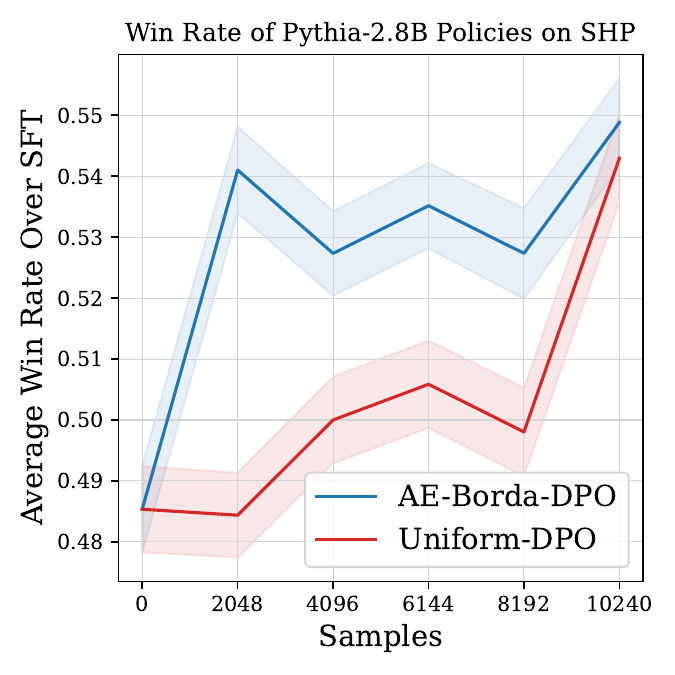}\\
    \includegraphics[width=0.24\textwidth]{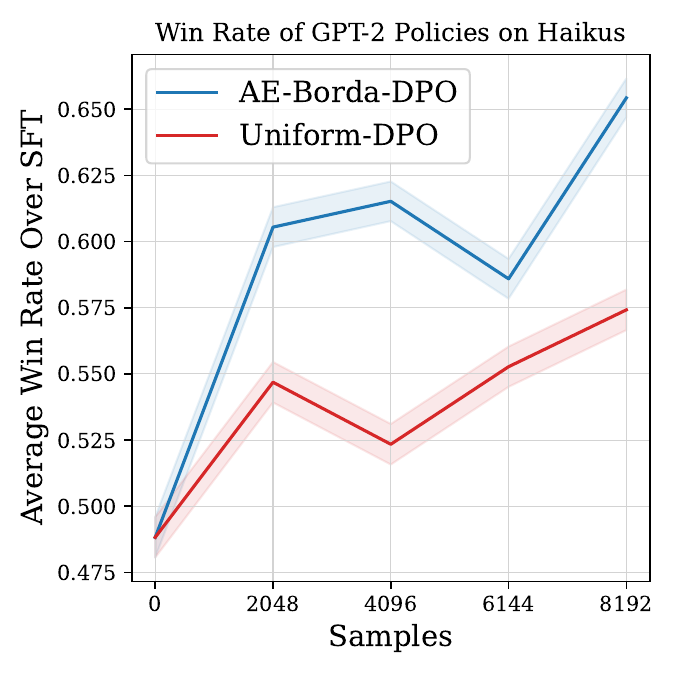}
    \includegraphics[width=0.24\textwidth]{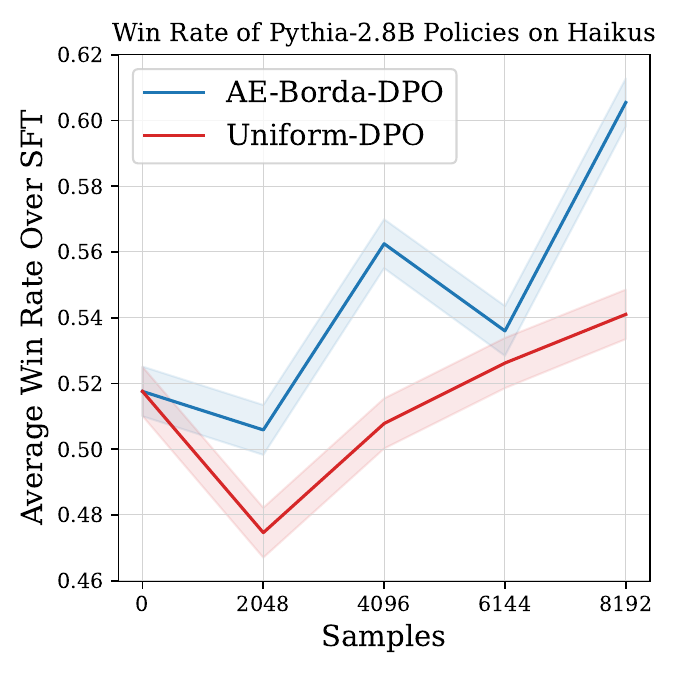}
    \includegraphics[width=0.24\textwidth]{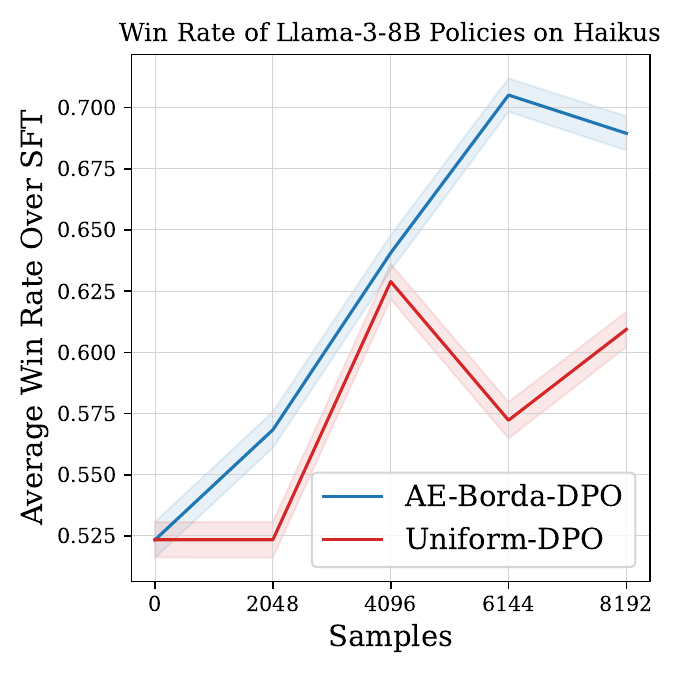}
    \includegraphics[width=0.24\textwidth]{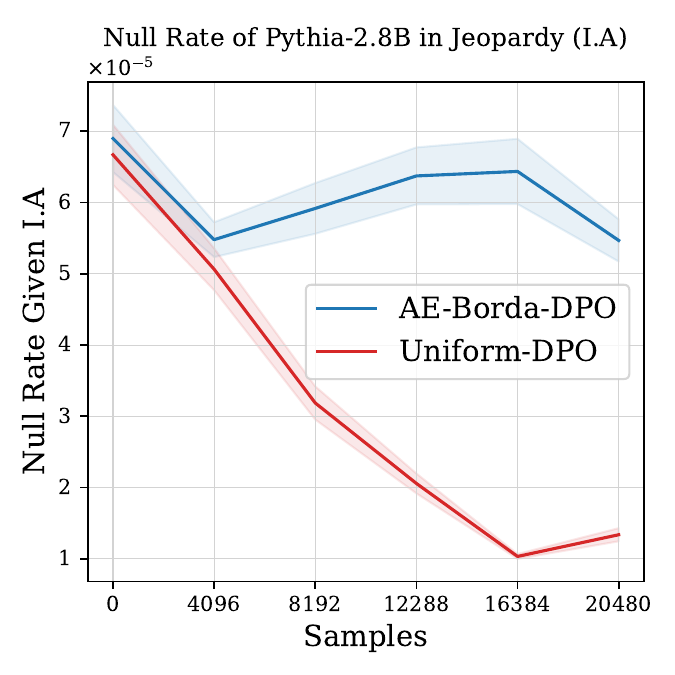}
    \vspace{-3mm}
    \caption{\small \textbf{Active Exploration for DPO in LLMs.} For multiple models and datasets we compare AE-Borda-DPO vs Uniform-DPO~\citep{dpo}. In the first six plots we show the average win rate over supervised fine-tuning (SFT), and in the final two plots (on the Jeopardy! dataset) show Null Rate given an incorrect answer (I.A.).}
    \label{fig:llm_expts_n}
    \vspace{-4mm}
\end{figure*}
We evaluate both methods on four different datasets.
The first two, the Anthropic Helpful-Harmless (HH) dataset \citep{bai2022training} and the Stanford Human Preferences (SHP) dataset \citep{pmlr-v162-ethayarajh22a}, are taken from the literature.
HH contains examples of two kinds: situations where an assistant needs to be helpful to a user asking a reasonable question and situations where an assistant should prioritize being harmless as the user is requesting a harmful action.
All completions in HH are machine-generated. SHP is a dataset of Reddit posts with comments in 18 different categories and therefore consists of a broader range of human-generated text, but doesn't have the inherent tradeoff of HH.
The third dataset is a Haikus dataset that we contribute to the literature. It is composed of 28,000 instruction prompts to write Haikus with specific details and corresponding examples of satisfactory Haikus. 

We also introduce a new Jeopardy!\ dataset that includes a preference structure that captures some of the structure of the game while addressing the question of LLM hallucination.
We augment a dataset consisting of 217,000 Jeopardy!\ questions and answers from HuggingFace \citep{jeopardy_huggingface} with a plausible incorrect answer, using GPT-3.5.
As in the game show, where points are deducted for an incorrect response, we embed in the training data that a correct answer is preferred to an abstention (the empty string), and both are preferred to the incorrect answer.   

We evaluate performance by checking the rate at which the policy produces answers that are preferred to those generated by the initial SFT policy. To assess the win rate, we use GPT-4o as a judge. We evaluate policies' performance every 2048 training samples for the SHP and Haikus datasets, and every 4096 samples for the Jeopardy! and HH datasets. To mitigate position bias, each pair of responses is evaluated twice, reversing their order in the second evaluation. We then report the average win rate across both evaluations.

We use GPT-2, Pythia-2.8-B, and Llama-3-8B as policies paired with the above datasets to provide a wide range of results with varying model sizes and capabilities. We provide extensive evaluation using 8 different experiments: GPT-2 trained on HH, SHP, Haikus, and Jeopardy! datasets, Pythia-2.8 trained on HH and Jeopardy! datasets and Llama-3-8B trained on SHP and Haikus datasets. 

In Figure~\ref{fig:llm_expts_n}, we see that our method AE-Borda-DPO, especially in the later part of our training run, consistently outperforms the standard DPO baseline that samples uniformly. We believe this to be due to our acquisition function $\alpha$, which accounts for the structure of the decision making problem in choosing which point to query. While we do find our results to be somewhat noisy with high standard error, due to the computational expense of these trials
and also the training oracle cost, we were not able to run each experimental baseline for a very large number of seeds to further reduce uncertainty in our results.

In the appendix, we provide the win rate for the Jeopardy! experiments. In Jeopardy!, we do not aim for our models to learn to provide correct answers at a higher rate through DPO training.
This is because trivia is intended not to generalize easily; in other words, it is difficult to imagine learning that the third US president was Jefferson given training examples of the first two.
Instead, we evaluate policies for this dataset on the rate at which they abstain for questions (``null rate'') where they counterfactually would have been correct vs where they would have been incorrect.
Ideally, the policy learned would \emph{always} abstain where it would have been incorrect and \emph{never} abstain where it would have been correct.
Naturally, this is an important goal in the alignment of LLMs and we hope to provide a straightforward benchmark for this effort.

For the Jeopardy!\ dataset, we checked the probability of an empty generation and whether it was the most likely token.
We generated a nonempty sequence in order to see whether the generated answer was correct, including as a counterfactual in the cases where the method would have abstained.
We plot this in Figure~\ref{fig:llm_expts_n} (final two plots), where we see that the AE-Borda-DPO method learns to abstain from answering questions more often when the model would have given the incorrect answer. However, we observe that the null rate in the online setting is low for both approaches. It is worth noting that the null rates given incorrect answers are higher when using offline data (results shown in Appendix~\ref{a:active_offline_dpo}). This observation indicates that with training on model generations, it is harder to learn to abstain. This is mainly due to the fact that it is not possible to explicitly embed the abstaining preference when we do not have direct control over the preference data. In the appendix, we also provide null rates given the correct answer. We note that the abstaining rate is nearly null for all methods in the case where they would have been correct, which is the desired behavior.
\section{Summary}
\label{s:discussion}
In this work, we addressed the problem of how to select contexts and actions at which to obtain human preferences, such that a reinforcement learning agent learns most efficiently.
We focus on this problem setting in the context of preference alignment in large language models (LLMs), where collecting data from humans is expensive.
The methods developed in this work show promise in reducing these costs.
We also make a theoretical contribution by giving guarantees on worst-case regret.
While our computational study was constrained by the resources available, given our method's promising results, we hope to scale up our experimental campaign to greater numbers of GPU resources and DPO steps in the future, to see how our methods perform with larger computational budgets.

\bibliographystyle{apalike}

\newpage
\appendix
\onecolumn

\section*{\center\LARGE Appendix}

\section{RKHS Regression
}\label{a:rkhs_regression}

At step $t$, we have data $\{(x_1,a_1,a'_1, w_1), \dots, (x_t,a_t,a'_t, w_t)\}$. The kernel ridge regression estimate is defined by,
\begin{equation}
    \mu_t = \argmin_{ f \in \mathcal{H}} \sum_{i=1}^t (f(x_i,a_i) - w_i)^2 + \lambda\|f\|_{\mathcal{H}}^2 \text{ .}
\end{equation}
Denote by $\boldsymbol{w}_t = [w_1, \dots, w_t]^T$ the vector of observations, $(K_t)_{i,j=1,\dots,t} = k(x_i,a_i,x_j,a_j)$ the data kernel matrix, and $k_t(x,a) = [k(x,a,x_1,a_1), \dots, k(x,a,x_t,a_t)]^T$ the data kernel features. We then have
\begin{align}
    \mu_t(x,a) &= k_t(x,a)^T (K_t + \lambda \mathbf{1}_t)^{-1} \boldsymbol{w}_t \text{ .}
\end{align}
We further have the posterior variance $\sigma_t(x,a)^2$ that determines the width of the confidence intervals,\looseness=-1
\begin{align}
    \sigma_t(x,a)^2 &= k(x,a,x,a) - k_t(x,a)^T(K_t + \lambda \mathbf{1}_t)^{-1} k_t(x,a) \text{ .}
\end{align}

\section{Proof of Theorem \ref{thm:regret}}
\label{a:regret_proof}
In this section we will prove our main Theorem, \ref{thm:regret}. The overall strategy of the proof is to use our Lipschitz assumption on the link function (more precisely, the relative Lipschitzness of the reward $r$ and the Borda function $f_r$) in order to go to the Borda function, which we can directly model from data. Then, we use our selection criteria as well as confidence bounds taken from \citet{chowdhury2017kernelized} and convergence rates taken from \citet{kandasamy2019multi} in order to complete the argument. We give these cited results as lemmas in what follows.

In order to attain a particular policy performance with probability $1 - \delta$, we must bound the error of the estimates given by our KRR process for a particular confidence level.
In order to do so, we adapt the result from \citet{chowdhury2017kernelized}, Theorem 2.
\begin{lemma}
    \label{lem:confidence_bounds}
    Let $\bonus = 2||f_r||_{\kernel} + \sqrt{2(\Phi_{t-1}(\Contextspace \times \Actionspace) + 1 + \log(2 / \delta))}$.
    Then with probability $1 - \delta$ we have for all time $t$ and any point $\left( x, a \right) \in \Contextspace \times \Actionspace$,
    \[|\mu_{t-1}(x, a) - f_r(x,a )| \leq \bonus \sigma_{t-1}(x, a).\]
\end{lemma}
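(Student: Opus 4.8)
The plan is to recognize that, despite our ultimate interest in $r$, the way data is collected makes the contextual Borda function $\borda$ the natural regression target, and then to invoke the self-normalized kernelized confidence bound of \citet{chowdhury2017kernelized}. First I would set up a filtration $\mathcal{F}_{t-1}$ containing all randomness through the adaptive selection of the pair $(x_t, a_t)$, but before the uniform draw of $a'_t$ and the preference outcome $\winner_t$. The crucial identity is that, because $a'_t \sim U(\Actionspace)$ and $\winner_t \sim \Bernoulli(\preferenceMatrix(x_t, a_t, a'_t))$,
\[
  \Ebb\!\left[\winner_t \mid \mathcal{F}_{t-1}\right]
  = \Ebb_{a'_t \sim U(\Actionspace)}\!\left[ \preferenceMatrix(x_t, a_t, a'_t) \right]
  = \borda(x_t, a_t),
\]
so that the KRR procedure of Appendix~\ref{a:rkhs_regression} run on the labels $\winner_t$ is exactly estimating $\borda$ rather than any proxy. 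This is the step that converts our dueling observations into a single-function kernel regression problem.

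Next I would verify the two hypotheses of \citet{chowdhury2017kernelized}, Theorem~2. The RKHS-norm requirement is immediate from Assumption~\ref{ass:norm}, which supplies $\norm{\borda}_{\kernel} \leq B$. For the noise, define $\eta_t = \winner_t - \borda(x_t, a_t)$; by the identity above this is a martingale difference sequence, since $\Ebb[\eta_t \mid \mathcal{F}_{t-1}] = 0$, and because $\winner_t \in \{0,1\}$ and $\borda \in [0,1]$ we have $\eta_t \in [-1,1]$. A conditionally centered random variable supported on an interval of length $2$ is $1$-sub-Gaussian by Hoeffding's lemma, so the conditional sub-Gaussianity hypothesis holds with variance proxy $R = 1$.

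With both hypotheses in hand, I would apply their Theorem~2 directly, identifying the maximal information gain $\gamma_{t-1}$ there with $\Phi_{t-1}(\Contextspace \times \Actionspace)$ and substituting $R = 1$. This yields, uniformly over all $t$ and all $(x,a) \in \Contextspace \times \Actionspace$ and with probability at least $1 - \delta$,
\[
  \abs{\mu_{t-1}(x,a) - \borda(x,a)} \leq \bonus\, \sigma_{t-1}(x,a),
\]
with $\bonus = 2\norm{\borda}_{\kernel} + \sqrt{2(\Phi_{t-1}(\Contextspace \times \Actionspace) + 1 + \log(2/\delta))}$; the precise form of the constants, including the factor on the norm term, follows from their statement once $R=1$ and the chosen ridge level are substituted.

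The main obstacle I anticipate is purely in setting up the probabilistic scaffolding correctly: one must choose the filtration so that the adaptively selected $(x_t, a_t)$ is $\mathcal{F}_{t-1}$-measurable while $a'_t$ and $\winner_t$ are not, so that the conditional-expectation computation recovering $\borda$ is valid and the residuals $\eta_t$ genuinely form a conditionally $1$-sub-Gaussian martingale difference sequence despite the data-dependent sampling rule. Once this adaptivity is handled cleanly, the self-normalized concentration machinery of \citet{chowdhury2017kernelized} applies essentially verbatim and delivers the stated uniform-in-time bound.
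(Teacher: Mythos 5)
Your proposal is correct and follows essentially the same route as the paper's proof: both reduce the dueling observations to kernel regression on $\borda$ via the identity $\Ebb[\winner_t \mid \mathcal{F}_{t-1}] = \borda(x_t,a_t)$ (using the independence of the uniform draw $a'_t$), verify the RKHS-norm hypothesis from Assumption~\ref{ass:norm} and conditional sub-Gaussianity of the residual $\eta_t = \winner_t - \borda(x_t,a_t)$ from $\winner_t \in \{0,1\}$, and then invoke Theorem~2 of \citet{chowdhury2017kernelized}. Your treatment is in fact slightly more explicit than the paper's, which asserts sub-Gaussianity without naming Hoeffding's lemma or the variance proxy $R=1$.
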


\begin{proof}
    To prove this result, we will verify that all the conditions from Theorem 2 of \citet{chowdhury2017kernelized} hold.
    Recall Assumption~\ref{ass:norm} which states that $ \left \lVert f_r \right \rVert_{\kernel} \leq B$.
    Next, we observe that since $a_t' \sim U \left( \Actionspace \right)$ (independent of everything else), we have that $ \mathbb{E} \left[ w_t \mid \mathcal{F}_{t - 1} \right] = f_r(x_t, a_t)$, where $ \mathcal{F}_t = \linkfunction \left( \left\{ \left( x_s, a_s, a_s', w_s \right) \right\}_{s = 1}^{t} \right)$ is the filtration generated by the past observations.
    Additionally, since $w_t \in \left\{ 0, 1 \right\}$ and $x_t, a_t$ are both $ \mathcal{F}_{t - 1}$ measurable, we see that $w_t$ can be written as
    \begin{equation*}
        w_t = f_r(x_t, a_t) + \eta_t,
    \end{equation*}
    where $\eta_t$ is $ \mathcal{F}_{t - 1}$-conditionally subGaussian.
    Therefore, we have met all the necessary conditions, and we can apply Theorem 2 of \citet{chowdhury2017kernelized} which gives us the desired result.
\end{proof}

This lemma jointly bounds the modeling error over the Borda function for all time $t$ though it introduces a dependence on the RKHS norm of $f_r$. This dependence is inherited from prior work, but we empirically study the relationship between the RKHS norm of a particular reward function and that of the associated Borda function in Section \ref{s:rkhs_borda}.

We also adapt a result from Lemma 8 of \citet{kandasamy2019multi} in order to understand the convergence of our uncertainty function $\sigma_t$.
\begin{lemma}
    \label{lem:convergence}
    Suppose we have $n$ queries $(q_t)_{t=1}^n$ taken from $\Contextspace \times \Actionspace$. Then the posterior $\sigma_t$ satisfies
    \[\sum_{q_t}\sigma^2_{t-1}(q_t)\leq \frac{2}{\log(1 + \eta^{-2})} \Phi_{n}(\Contextspace\times\Actionspace).\]
\end{lemma}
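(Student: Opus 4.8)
The plan is to follow the standard argument that the maximal information gain controls the accumulated posterior variance, adapting Lemma~8 of \citet{kandasamy2019multi}. The central object is the Gaussian surrogate model implicit in the definition of $\Phi_n$, in which querying $q_t$ returns $r(q_t) + \epsilon_t$ with $\epsilon_t \sim N(0, \eta^2)$. First I would establish the identity
\[
I(y_{1:n}; r_{1:n}) = \frac{1}{2}\sum_{t=1}^n \log\!\left(1 + \eta^{-2}\sigma_{t-1}^2(q_t)\right),
\]
which follows from the chain rule for mutual information together with the fact that the one-step predictive distribution of the Gaussian process is Gaussian with variance $\sigma_{t-1}^2(q_t)$, so each incremental term is $\frac{1}{2}\log(1 + \eta^{-2}\sigma_{t-1}^2(q_t))$. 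Here I would note that the $\sigma_{t-1}$ produced by the KRR procedure of Appendix~\ref{a:rkhs_regression} coincides with the GP posterior standard deviation once the ridge parameter $\lambda$ is identified with the noise level $\eta^2$, so the same closed form and hence the same identity applies.

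Next I would convert each logarithmic term back into the variance via an elementary monotonicity argument. Since $s \mapsto s/\log(1+s)$ is nondecreasing on $(0,\infty)$ and the kernel is bounded (so $\eta^{-2}\sigma_{t-1}^2(q_t) \leq \eta^{-2}$), the inequality $s \leq \frac{\eta^{-2}}{\log(1+\eta^{-2})}\log(1+s)$ holds for every $s \in [0,\eta^{-2}]$. Applying this with $s = \eta^{-2}\sigma_{t-1}^2(q_t)$ gives, for each $t$,
\[
\sigma_{t-1}^2(q_t) \leq \frac{1}{\log(1+\eta^{-2})}\log\!\left(1 + \eta^{-2}\sigma_{t-1}^2(q_t)\right).
\]
Summing over $t$, substituting the identity from the first step, and bounding the information gain of this particular query sequence by its maximum over all $n$-subsets yields
\[
\sum_{t=1}^n \sigma_{t-1}^2(q_t) \leq \frac{1}{\log(1+\eta^{-2})}\sum_{t=1}^n \log\!\left(1 + \eta^{-2}\sigma_{t-1}^2(q_t)\right) = \frac{2\,I(y_{1:n}; r_{1:n})}{\log(1+\eta^{-2})} \leq \frac{2}{\log(1+\eta^{-2})}\,\Phi_n(\Contextspace\times\Actionspace),
\]
which is exactly the claimed bound, with the factor of two arising naturally from the $\frac{1}{2}$ in the information-gain identity.

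The main obstacle is the first step rather than the second: the monotonicity inequality is routine, but care is needed to justify the information-gain identity in a way that is consistent with the paper's setup. In particular, the actual observations $\winner_t$ are Bernoulli, not Gaussian, whereas $\Phi_n$ is defined through a Gaussian-noise channel; the resolution is that $\Phi_n$ and the variances $\sigma_{t-1}$ are purely kernel-theoretic quantities that do not depend on the true observation noise, so the identity is proved for the Gaussian surrogate and then used only as an analytic inequality on $\sigma_{t-1}$. I would also need the boundedness of $\sigma_{t-1}^2(q_t)$ (equivalently $\kernel(q,q)\le 1$, or a rescaling thereof) to invoke the monotonicity inequality on the correct interval, so I would state this normalization explicitly.
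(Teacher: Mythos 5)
Your proposal is correct: it is precisely the standard information-gain argument (the chain-rule identity $I = \tfrac12\sum_t \log(1+\eta^{-2}\sigma_{t-1}^2(q_t))$ plus the monotonicity bound $s \le \tfrac{\eta^{-2}}{\log(1+\eta^{-2})}\log(1+s)$ on $[0,\eta^{-2}]$), and your handling of the two subtle points — that the Bernoulli observations are irrelevant because $\sigma_{t-1}$ and $\Phi_n$ are purely kernel-theoretic quantities defined through the Gaussian surrogate, and that a bounded kernel is needed for the monotonicity step — is exactly right. The paper itself does not prove this lemma but imports it by citing Lemma~8 of \citet{kandasamy2019multi}; your argument is the standard proof underlying that cited result, so it matches the paper's route rather than departing from it.
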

Lemma~\ref{lem:convergence} gives us a handle on how quickly we can expect the uncertainty function to shrink as additional datapoints are observed.

Now that we have lemmas \ref{lem:confidence_bounds} and \ref{lem:convergence} in place, we can proceed to the proof of the main result.

\begin{proof}
        In this proof, we condition on the event in Lemma~\ref{lem:confidence_bounds} holding true.
        Given that occurence, we can say the following for every $\xinset$.
        \begin{align}
                \max_{\ainset} r(x, a) - r(x, \bestpolicy(s)) & \overset{\text{Assumption \ref{ass:borda}}}{\leq} L_1
                \left(\max_{\ainset} f_r (x, a) - f_r(x,\bestpolicy (x))\right) \\
             & \overset{\text{Lemma~\ref{lem:confidence_bounds}}}{\leq}
                L_1\left(\max_{\ainset} f_r (x, a) -   \max_{t \in [T]}\; \lcbr(x, \bestpolicy(x))\right)  \\
             &\overset{ \text{Def. of } \bestpolicy}  {=}
                 L_1\left(\max_{\ainset} f_r (x, a)  - \max_{\ainset} \max_{t \in [T]}\; \lcbr(x, a)\right) \\
             &=
                 L_1\min_{t \in [T]} \left(\max_{\ainset} f_r (x, a)  - \max_{\ainset} \; \lcbr(x, a)\right) \\
             & \overset{\text{Lemma~\ref{lem:confidence_bounds}}} {\leq}
                 L_1\min_{t \in [T]} \left(\max_{\ainset} \ucbr(x, a) - \max_{\ainset} \; \lcbr(x, a)\right) \\
            & \overset{ \text{Def. of } x^t}{\leq}
                L_1\min_{t\in [T]} \left(\max_{\ainset} \ucbr(x^t, a) - \max_{\ainset} \; \lcbr(x^t, a)\right) \\
            & \overset{\text{Def. of } a^t} {\leq}
                L_1\min_{t \in [T]}\left( \ucbr(x^t, a^t) -  \; \lcbr(x^t, a^t)\right) \\
            & \leq
                \frac{L_1}{T}\sum_{t=1}^T \left( \ucbr(x^t, a^t) -  \; \lcbr(x^t, a^t)\right)\\
            & = \frac{L_1}{T}\sum_{t=1}^T 2\bonus \sigma_t(x^t, a^t)\\
            & \overset{\bonus\text{ is increasing}}{\leq} \frac{2L_1\lastbonus}{T}\sqrt{\left(\sum_{t=1}^T\sigma_t(x^t, a^t)\right)^2}\\
            & \overset{\text{Cauchy-Schwarz}}{\leq}
                \frac{2L_1\lastbonus}{T}\sqrt{T\sum_{t=1}^T\sigma^2_t(x^t, a^t)}\\
            & \overset{\text{Lemma~\ref{lem:convergence}}}{\leq}
                \frac{2L_1\lastbonus}{\sqrt{T}}\sqrt{C_1 \Phi_T}\\
            & \overset{\text{def of }\lastbonus}{=} \frac{2L_1}{\sqrt{T}}(2B + \sqrt{2(\Phi_{t-1} + 1 + \log(2 / \delta))})\sqrt{C_1 \Phi_T}\\
            & = O \left(  \frac{L_1}{\sqrt{T}} \left(B + \Phi_T\sqrt{\log \frac{1}{\delta}} \right)\right).
        \end{align}

    \end{proof}

\section{RKHS norms of $r$ and $\borda$}
\label{s:rkhs_borda}
In order to understand the dependence of our estimation bound on the RKHS norm $||\borda||_{\kernel}$, we ran numerical experiments on sampled reward functions. For a variety of context and action dimensions, we sampled 1000 reward functions as in Section~\ref{s:kocbd_experiments} and numerically approximated their RKHS norms. We also made a Monte-Carlo estimate of the Borda function $f_r$ for each of the reward functions sampled and numerically approximated its RKHS norm.
To do this, we uniformly sample 1,000 points $x_i$ from the input space, compute the regularized kernel matrix $K$ for this set $x_i$, solve the KRR problem $K \alpha = f(x)$ for $\alpha$. Then we compute the quadratic form $\sqrt{\alpha^T K\alpha}$ as an estimate of the RKHS norm.

In Table~\ref{tab:borda_norm}, we present the results of comparing the RKHS norms of 1000 reward functions and their associated Borda functions sampled as in Section~\ref{s:kocbd_experiments}. A `win' was counted when the Borda function had smaller RKHS norm and a `loss' otherwise. The win margin is the average difference in RKHS norms of the reward and Borda functions, with a positive value when the Borda function was of smaller norm.
It is clear here that in
general (though not always) the RKHS norm of the Borda function $\borda$ for a particular reward function $r$ is smaller than the RKHS norm of the reward function $r$ itself. This relationship seems to grow stronger as the input dimensionality of the reward function grows larger.

\begin{table}[]
\centering
\begin{tabular}{@{}llll@{}}
\toprule
Context Dimension & Action Dimension & Win Rate & Win Margin\\ \midrule
0 & 1 & 0.16 & -6.3 \\
1 & 1 & 0.89 & 5.1 \\
1 & 3 & 1 & 21.4 \\
3 & 1 & 1 & 21.5 \\
3 & 3 & 1 & 38.7 \\
10 & 10 & 1 & 19.6 \\
\bottomrule
\end{tabular}
\caption{Comparison of RKHS norms of reward functions and associated Borda functions}
\label{tab:borda_norm}
\end{table}

\section{Additional Related Work}
\label{a:relatedwork}

In this section, we discuss additional related work, including alternative contextual bandit methods, uncertainty estimation in large language models, and concurrent work on active selection of data in LLMs.

\paragraph{Human feedback in RL and LLMs}
Here we discuss additional related work on human feedback in reinforcement learning, and more recently, in LLMs.
As described in Section~\ref{relatedwork}, \citet{deepRLFromHumanPreferences} enabled sample-efficient collection of human feedback for deep reinforcement learning (RL) by training a reward model as the RL target.
This technique showed significant performance benefits in practice; for example, in the Atari test case \cite{mnih2013playing}, where naive deep RL would have necessitated thousands of hours of gameplay, they accomplished superior performance with just 5,500 or several hours of human queries.
More recently, human preference feedback has also been used more recently to improve the performance of LLMs.
For example, many recent approaches have demonstrated the effectiveness of using human feedback to enhance LLM stylistic continuation \citep{ziegler2019fine}, text summarization \citep{stiennon2020}, translation \citep{Kreutzer2018}, semantic parsing \citep{LawrenceAndReizler2018}, review generation \citep{Cho2018}, and evidence extraction \citep{Perez2019}. In particular, the work by \cite{bai2022training} places focus on improving model reliability and robustness by incorporating human feedback to gauge the helpfulness or harmfulness of its responses. However, while effective, the integration of human feedback comes with substantial costs. For example, \citet{stiennon2020} achieved substantial enhancements over baseline methods but required the generation of summaries for 123,169 posts from the TL;DR dataset, a task performed by a large team of labelers from crowdsourcing platforms.
This heavy resource requirement is reflected in state-of-the-art work. \citet{ouyang2022training} emphasizes RLHF to improve the alignment of the GPT-3 model across aspects such as toxicity, hallucinations, and overall quality. Here, the team enlisted the efforts of 40 labelers and worked with a dataset comprising over 100,000 examples labeled by humans.

\paragraph{Uncertainty Estimation in Large Language Models}
\label{a:uncertaintyllms}
Estimating the epistemic uncertainty in large language models is still an active area of research and there are few prior works on this topic (focusing specifically on epistemic uncertainty).
For example, \cite{osband2022fine} augment existing models with additional layers to model randomness, and subsequently the uncertainty. However performing uncertainty quantification in a parallelized fashion requires a significant memory overhead. To be more amenable to larger models, we instead use a dropout-augmented model to estimate uncertainty~\citep{gal2016dropout}, as detailed in Section~\ref{s:llm}.

\paragraph{Concurrent work on active learning in LLMs}
Concurrently with our work, there has been recent releases of papers related to active data selection for LLMs, which we cover in this section. Note that these papers are predominantly recent and yet unpublished work, released on preprint servers, some of which build on our method and setting.
For example, \citet{das2024provably,ji2024reinforcement} builds on our active contextual dueling bandit setting. \citet{das2024provably}, aimsing to develop a method that yields improved theoretical guarantees with reduced assumptions.
\citet{zhang2024self} proposed a version of DPO using bilevel optimization to optimistically bias towards potentially high-reward responses, though does not use an explicit uncertainty estimate.
\citet{xiong2024iterative} develops an online exploration algorithm as well as a rejection sampling method for offline settings, framing the problem as a reverse-KL regularized contextual bandit problem.
\citet{muldrew2024active} propose an active learning method for DPO, based on the predictive entropy of LLM predictions as well as uncertainty given by the (implicit) reward model.
\citet{xie2024exploratory} presents a method that performs DPO with an exploration bonus for improved efficiency.
Finally, \citet{hubotter2024efficiently} work on a method for active selection of examples for fine-tuning of LLMs using active data selection, for a (single) given prompt at test time.

\section{Active DPO Using the Reward Function and Offline Data}
\label{a:active_offline_dpo}
In this section, we start by proposing another active learning acquisition function based on the reward model. Then we provide a discussion contrasting the real use cases of active learning using online data generated from the policy and the synthetic setting where we can use existing offline benchmarks to evaluate active learning methods. 

We propose a new acquisition function that uses the confidence interval of the reward function instead of the generalized Borda function that operates based on the preference model. Using the reward model provides an intuitive solution in RLHF in general and DPO in particular, since the goal is to learn a policy that generates high-reward answers. We can approximate the confidence interval for $r$ ($\rewub$ and $\rewlb$) using the reward expression as the ratio of the policies as defined in the DPO paper. Given the uncertainty estimation method described in section \ref{onlineAE}, we can compute our upper and lower bounds as
\begin{align}
    \rewub(x, a) = &\sum_{t_i \in a}\mu(t_i\mid x, t_1, \dots, t_{i-1})
    + \beta \sigma(t_i\mid x, t_1, \dots, t_{i-1})\add{-\log\sftpolicy(a\mid x)}, \nonumber
\end{align}
\begin{align}
    \rewlb(x, a) = &\sum_{t_i \in a}\mu(t_i\mid x, t_1, \dots, t_{i-1})
    - \beta \sigma(t_i\mid x, t_1, \dots, t_{i-1})\add{-\log\sftpolicy(a\mid x)}, \nonumber
\end{align}
for an uncertainty parameter $\beta > 0$. Here, we define an acquisition function as:
\begin{equation}
    \label{eq:context_selection_dpo}
    \alpha(x) = \max_{a\in\Actionspace\add{(x)}}\rewub(x, a) - \max_{a \in \Actionspace\add{(x)}}\rewlb(x, a).
\end{equation}
In this equation, $\alpha(x)$ is the uncertainty of the state-value function according to $x$. In choosing the states where the potential for error in the value achieved is largest, the agent can learn to behave well in those places.
This criterion is similar to that in \cite{li2023nearoptimal} and provides similar guarantees to ours for max-regret in the \add{active} contextual bandit setting.
In situations like ours where we are using fixed offline datasets, we set $\Actionspace\add{(x)}$ in \eqref{eq:context_selection_dpo} to the set of available responses \add{for a particular action; otherwise, we use $\Actionspace(x) = \Actionspace$}.

\paragraph{An algorithm for active RLHF/DPO}
From here, we use the acquisition function in \eqref{eq:context_selection_dpo} in order to choose points that are maximally informative. We must do this in batches in order to respect the constraints of training large models. We address this in the naive fashion, pulling a larger batch of some size, evaluating $\alpha$ and then choosing the top-$b$ elements in order to address this criterion. We refer to our full procedure as AE-DPO, and give a description in Algorithm~\ref{alg:DPOAE}.
\begin{algorithm}[h!]
    \caption{\dpoae}
    \label{alg:DPOAE}
    \begin{algorithmic}[1] 
        \STATE {\bfseries Input:} Reference policy $\sftpolicy$, exploration parameter $\beta$, policy constraint weight $\gamma$, batch size $b$, number of iterations $N$
        \FOR {$t=n_0 + 1,\dots,N$}
            \STATE Draw an unlabeled batch $B_u = \{(x_i, a_i, a'_i)\} \sim D$.
            \STATE Evaluate $\alpha(x_i)$ and let $B_l$ be a batch of the top-$b$ elements of $B_u$ by $\alpha$ value.
            \STATE Collect labels $r_i$ and add them to $B_l$.
            \STATE Update the policy $\pi_\theta$ (initialized from the ref.) using a gradient step against $\dpoloss$ using $B_l$.
        \ENDFOR
        \STATE Output $\pi_\theta$
    \end{algorithmic}
\end{algorithm}
Though the use of this acquisition function means that we lose the theoretical guarantees given by the Borda function, we assert that given the rates of convergence associated with kernelized approximations of neural net architectures, we are not giving up strong guarantees in this setting.
\paragraph{Algorithms Evaluation on Offline and Online Preference Data}

The real use case scenario of active learning for preference data would be on pairs of data points that differ from any data used during the SFT step (otherwise the SFT labels would be by definition always the preferred action). Therefore, the online approach we proposed in the main paper with AE-Borda-DPO reflects the real use case of active learning. However, prior work on preference learning \cite{dpo} makes use of existing offline datasets with labeled preferences (e.g. Stanford Human Preferences Dataset \cite{pmlr-v162-ethayarajh22a}). Concurrent active learning work \cite{} also uses these datasets to evaluate their approaches in hypothetical scenarios for easier training and evaluations.
We do the same and use the offline datasets to evaluate the AE-DPO method. However, evaluating our original AE-Borda-DPO approach on the offline dataset was not feasible due to the acquisition function definition that assumes that the actions selected are not seen by the SFT policy before. 
Assuming perfect training of the policy on the SFT data (also referred to as chosen answers $a$), if we evaluate our acquisition function on the existing pair of data, we will have:  
\[
    \sftpolicy(x) = \begin{cases}
        a & \text{ if }w = 1\\
        a' & \text{ else}
    \end{cases}
\]
for every point in our dataset. And if we were to regress on $\genborda{\sftpolicy}$, all our labels would be $0$ (since $\sftpolicy$ is defined as the policy that wins on the dataset). Though this detail is a result of properly defined assumptions, it prevents us from using the offline labeled data to evaluate AE-Borda-DPO. However, this issue does not occur in the definition of the AE-DPO acquisition function, so we use the offline data for its evaluation.
\section{Additional Experiments for Kernelized Setting}
\label{a:kocdb_addtl_experiments}

In Figure~\ref{fig:progression}, we depict the progress of the \algnm~method as it continually acquires data. One can see that the estimated optimal policy (red, second row) converges to a function quite similar to the ground truth (red, first row) as more data is collected. In addition, it is clear that the selection criterion targets parts of the domain which are relevant to policy learning while avoiding obviously bad regions. We also see in the fourth row that the uncertainty over the value function decreases relatively smoothly across the context space, supporting the idea that our method controls max-regret effectively.
\begin{figure*}
    \centering
    \hspace{6mm} \textbf{Time = 50} \hspace{24mm} \textbf{Time = 150} \hspace{24mm} \textbf{Time = 600} \\
    \vspace{3mm}
    \includegraphics[width=0.3\textwidth]{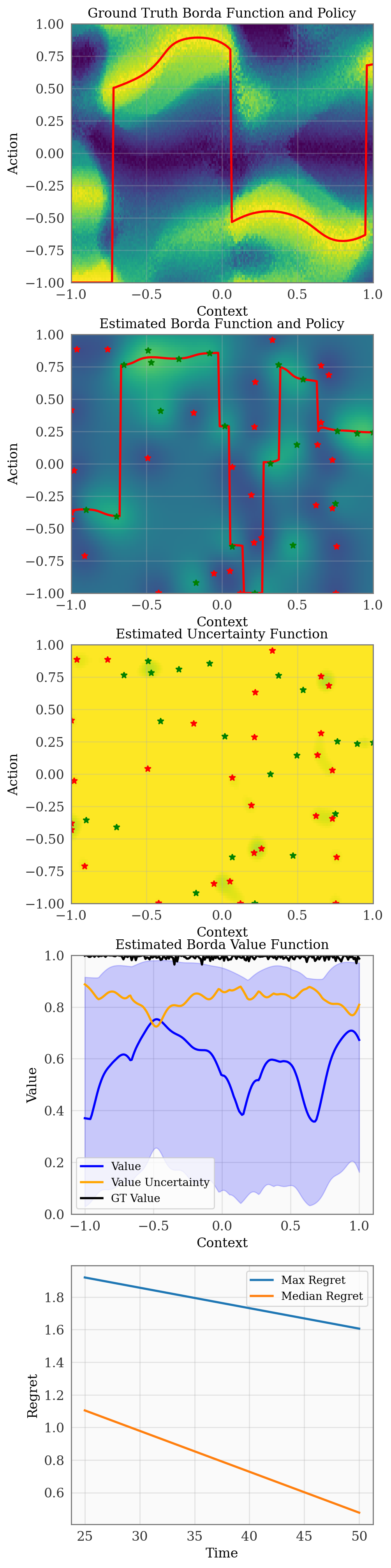}
    \includegraphics[width=0.3\textwidth]{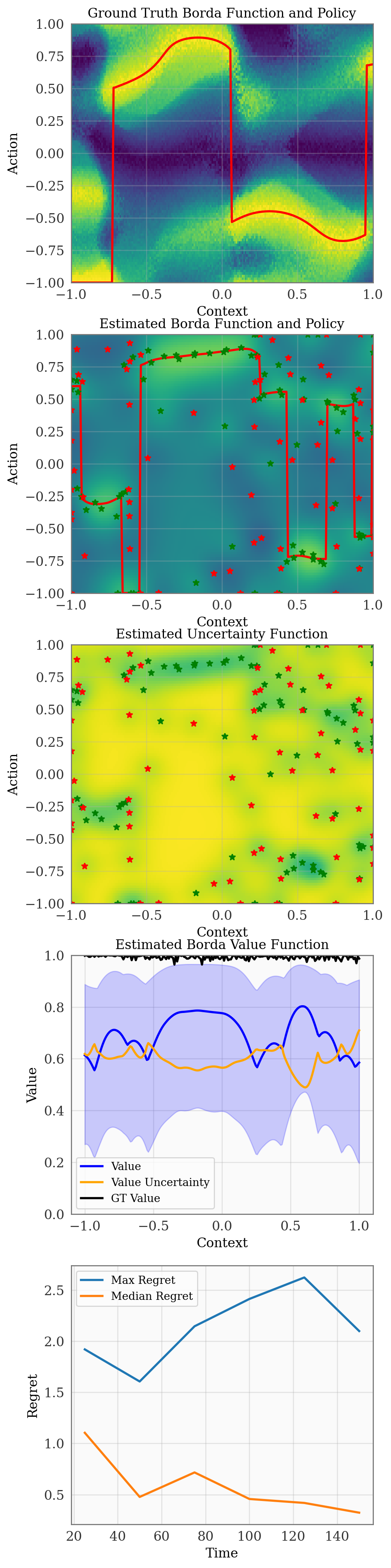}
    \includegraphics[width=0.3\textwidth]{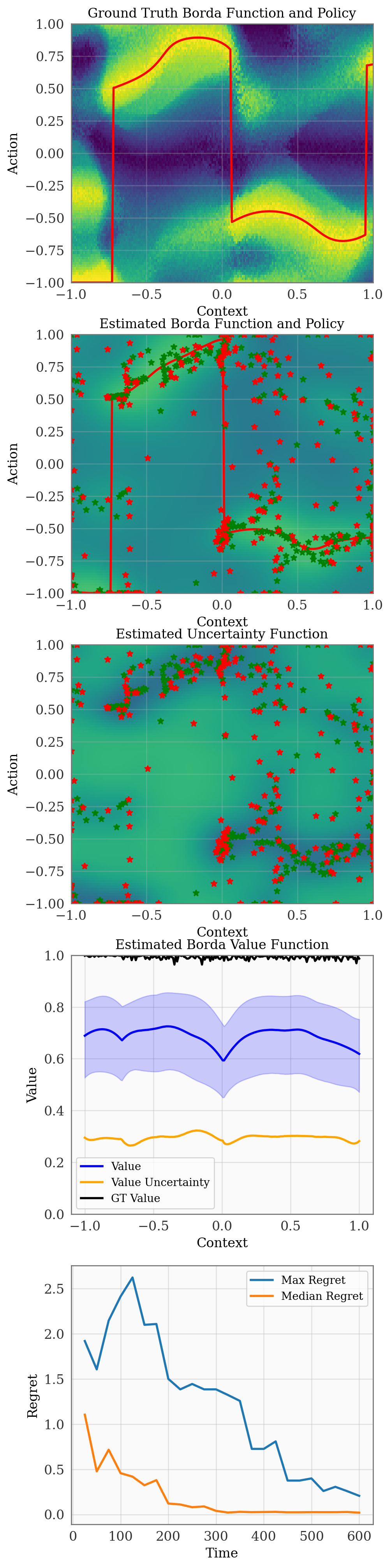}
    \caption{Progress of \algnm~across 50, 150, and 600 datapoints. From the top downwards, the charts show the ground truth function, the mean of the posterior estimate of $f_r$, the uncertainty function, the estimate of the value function as well as the acquisition function given in \eqref{eq:context_selection}, and the regret over time.}
    \label{fig:progression}
\end{figure*}

\section{Additional Experiments for LLMs: AE-Borda-DPO on Jeopardy!}
In Figure~\ref{fig:jeop_win}, we provide the full experiments on Jeopardy! dataset and include the win rate. As discussed in the main paper, the win rate does not improve significantly for any of the methods but the null rate given incorrect answers is higher for AE-Borda-DPO.

\begin{figure*}[h!]
    \centering
    \includegraphics[width=0.24\textwidth]{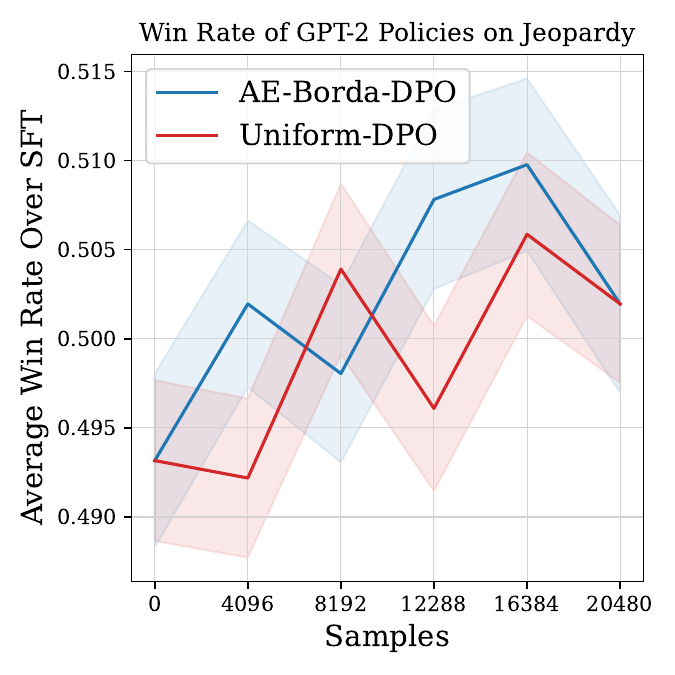}
    \includegraphics[width=0.24\textwidth]{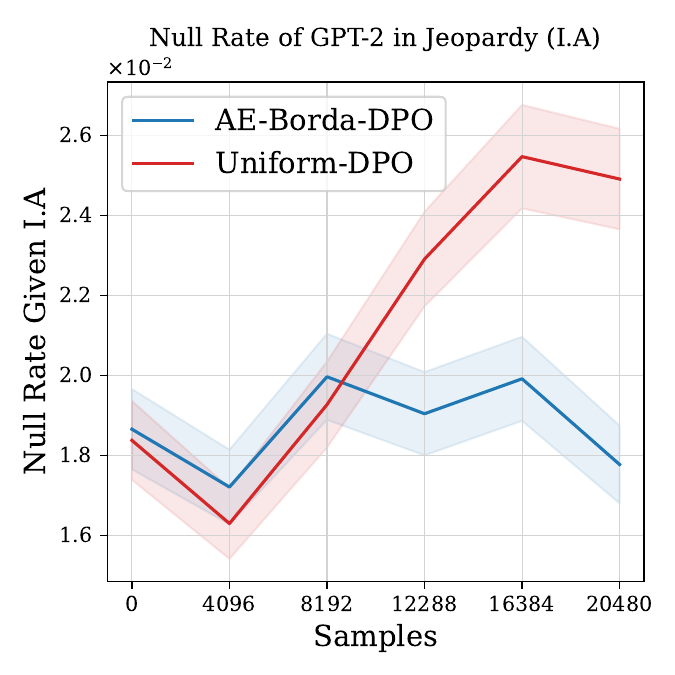}
    \includegraphics[width=0.24\textwidth]{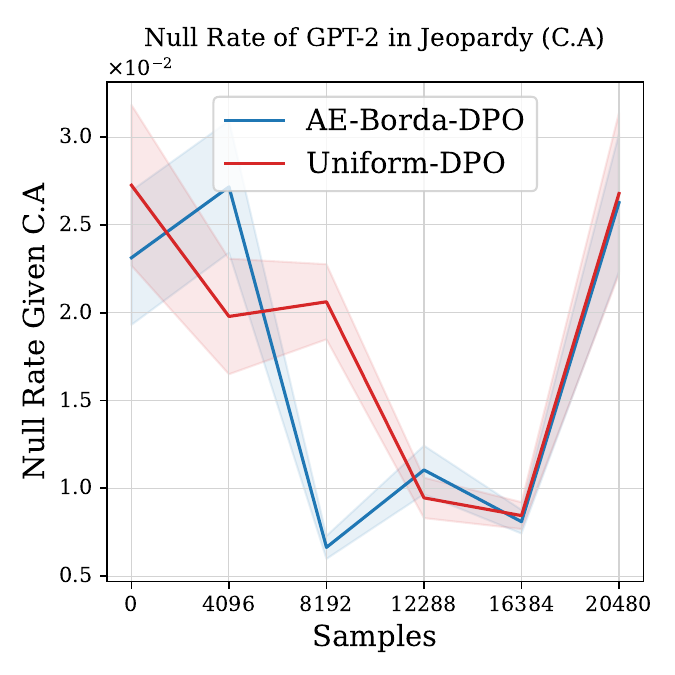}\\
    \includegraphics[width=0.24\textwidth]{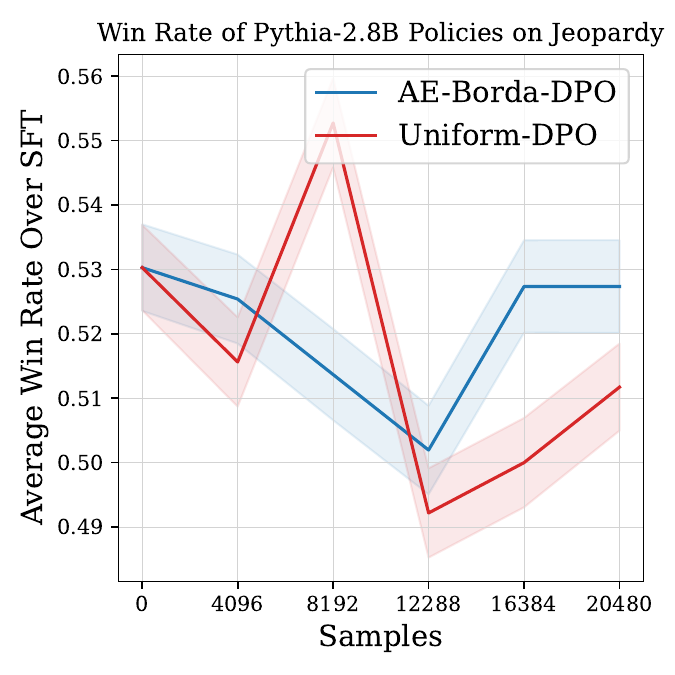}
    \includegraphics[width=0.24\textwidth]{figonlineaverage/jeop_pythia28_nullrate_ia.pdf}
    \includegraphics[width=0.24\textwidth]{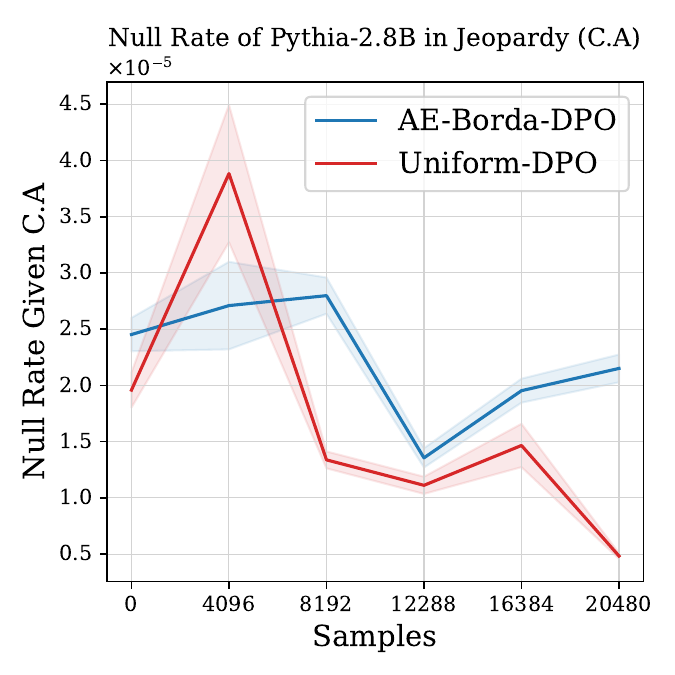}\\
    \includegraphics[width=0.24\textwidth]{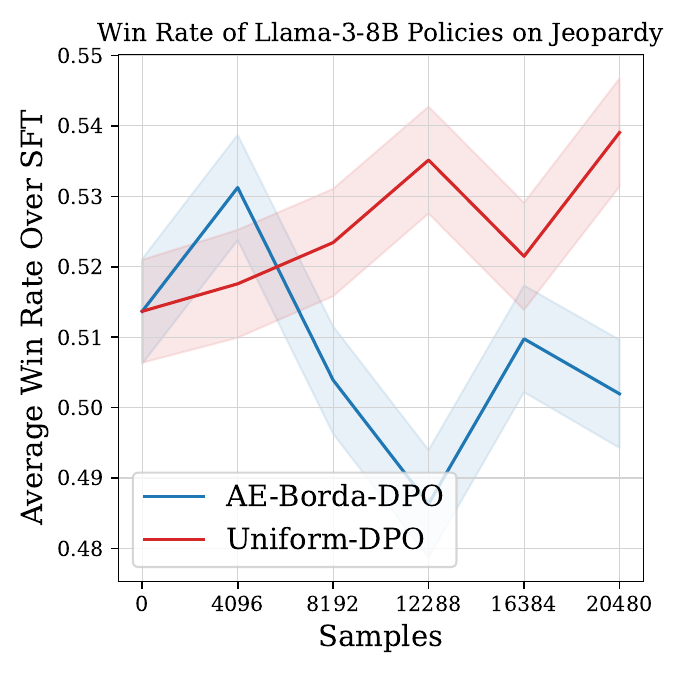}
    \includegraphics[width=0.24\textwidth]{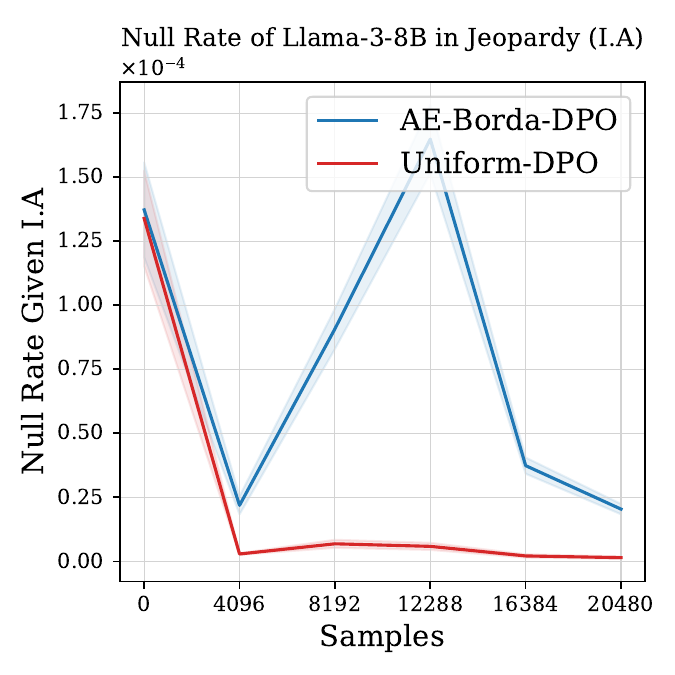}
    \includegraphics[width=0.24\textwidth]{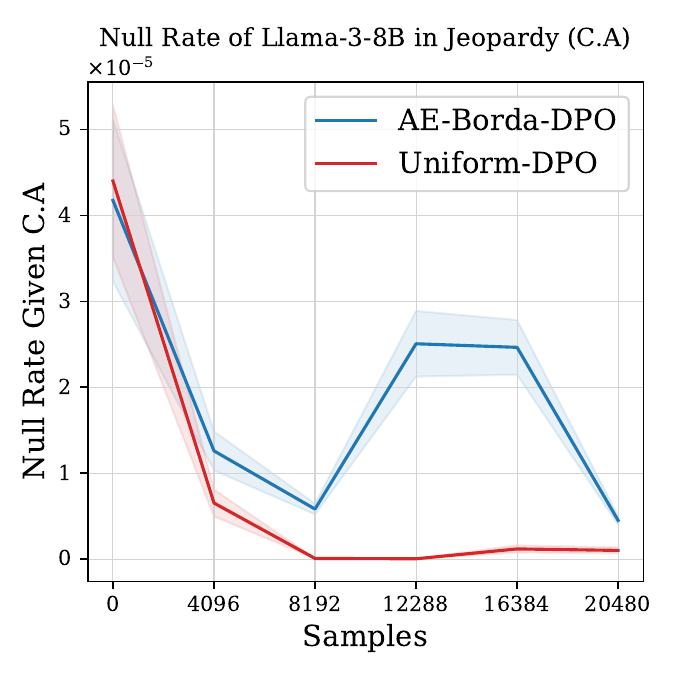}
    \caption{\small \textbf{Active Exploration for DPO in LLMs.} Experiments run on Jeopardy dataset. The plots show the average win rate over supervised fine-tuning (SFT), the Null Rate given an incorrect answer (I.A.) and the Null Rate given a correct answer (C.A.).}
    \label{fig:jeop_win}
\end{figure*}
\section{Additional Experiments for LLMs: AE-DPO on Offline Datasets}
\label{a:additional_llm_results_offline_setting}
Given the use of offline data, we do not need an oracle during training. We evaluated the AE-DPO approach on a higher number of data points and also against more baselines. To reduce the computational cost of these additional experiments, we applied Qlora to the models used here. In order to evaluate whether our method is able to improve the selection of datapoints in RLHF, we conduct a set of experiments in which we train LLMs on three datasets using one of four methods. The goal of our empirical study is to see whether improving the data selection strategy causes the downstream policy to perform better on a given training task. In order to isolate the effect of the data selection method, we vary the selection method while largely holding our model and training procedure consistent.
In all the experiments in this section, we compare four methods: \textbf{DPOAE}, the method we presented in Section~\ref{a:active_offline_dpo}; \textbf{USDPO}, which chooses $x$ that maximize variance of the log probabilities of completions; \textbf{DPO}, the method from \citet{dpo}, selecting batches uniformly at random; and \textbf{SFT}, which continues supervised learning with uniformly selected batches.
In our training pipeline, we first train a baseline model with a Llama-7B \citep{llama} architecture using supervised fine-tuning (SFT) on a 40\% split of data.
We add a dropout layer before the penultimate linear layer for our uncertainty estimation mechanism and fine tune with dropout active.
Next, we train using each of the four methods for 100000 samples, evaluating every 2048 samples---each time using our initial SFT trained model as a starting point. We give additional information on our experimental procedures in Section~\ref{a:additional_details}.

We evaluate these methods on three different datasets.
The first two, the Anthropic Helpful-Harmless (HH) dataset \citep{bai2022training} and the Stanford Human Preferences (SHP) dataset \citep{pmlr-v162-ethayarajh22a}, are taken from the literature. We evaluate policies trained on both of these by checking the rate at which the policy produces answers which are preferred to the chosen completion for the prompt in the dataset.

For the completions generated from the HH and SHP prompts, we use GPT-3.5 \citep{brown2020language} to generate winners amongst comparisons between the preferred choices given in the dataset. We give the prompts we use for evaluation in Section~\ref{a:eval_prompt}. In Figure~\ref{fig:comparison}, we see that for the completions in the later part of our training run, \dpoae\ performs best among the methods and outperforms \usdpo\ as well as the other baselines that sample uniformly. We believe this to be due to our acquisition function $\alpha$, which accounts for the structure of the decision making problem in choosing which point to query. We do find our results to be noisy---due to the computational expense of these trials,
we were not able to run each experimental baseline for a large number of seeds to further reduce uncertainty in our results.

For the Jeopardy dataset, Similar to AE-Borda-DPO We found that our models do not learn to provide correct answers at a higher rate through a small amount of DPO training or additional SFT beyond what is required for them to answer the questions.
\add{We include an additional exhibit where we use the factual nature of this dataset to begin to evaluate the dropout-based uncertainty estimation techniques we use in \cref{s:uncertainty}.}

For the Jeopardy!\ dataset, we checked the probability of an empty generation and whether it was the most likely token.
We generated a nonempty sequence in order to see whether the generated answer was correct, including as a counterfactual in the cases where the method would have abstained.
We plot this in Figure~\ref{fig:llm_expts}, where we see that the \dpoae\ method is the only method that learns to abstain from answering questions (modestly) more often when the model would have given the incorrect answer. We also find that the standard DPO method quickly learns not to abstain. No methods abstain more than a couple percent of the time in the case where they would have been correct.

\begin{figure*}
    \centering
    \includegraphics[width=0.24\textwidth]{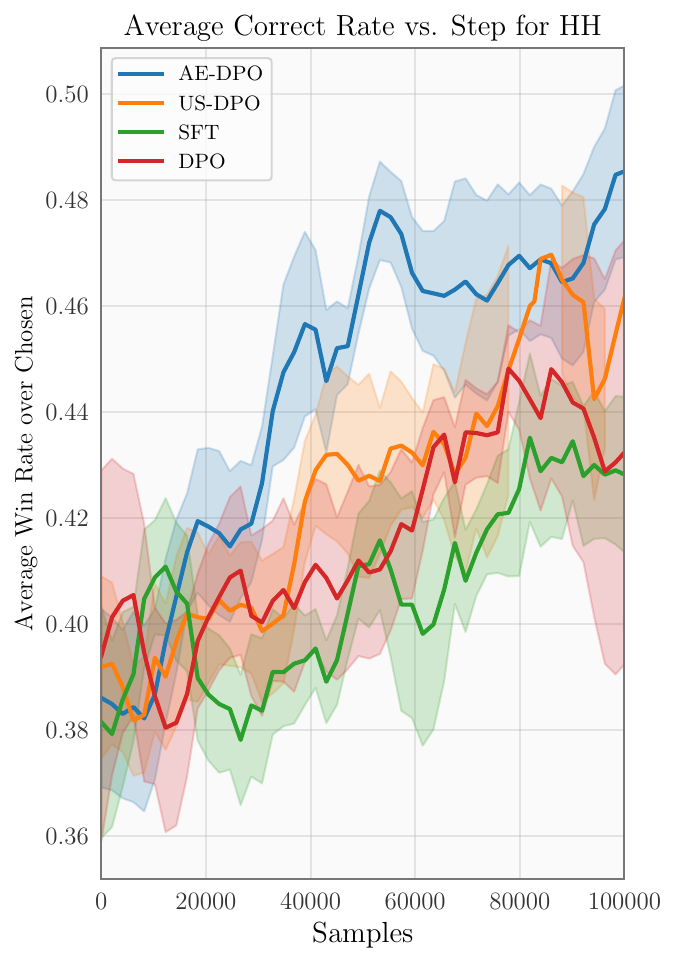}
    \includegraphics[width=0.24\textwidth]{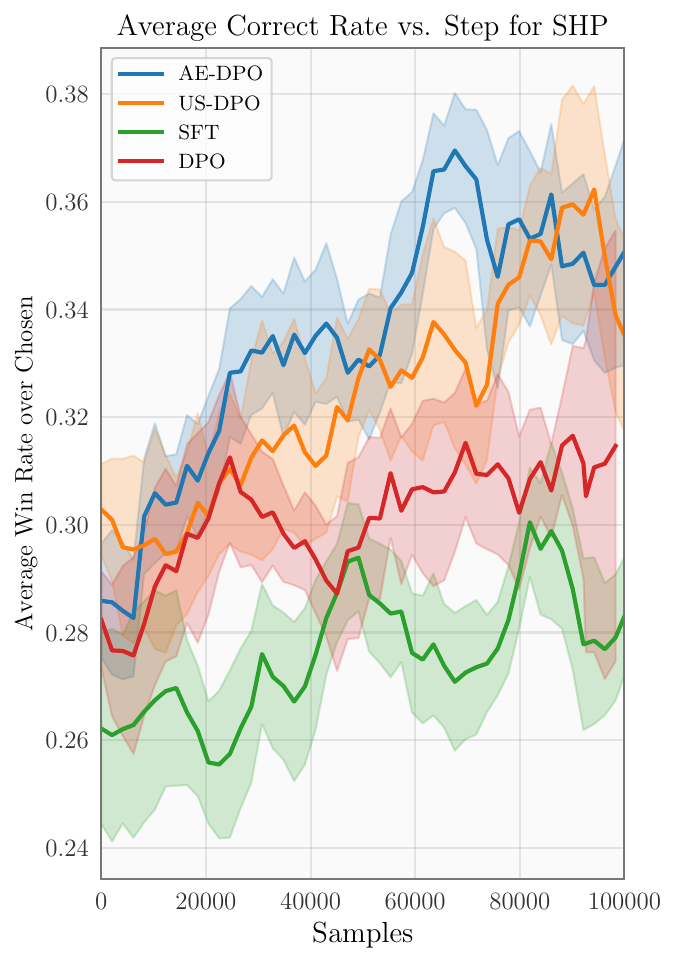}
    \includegraphics[width=0.49\textwidth]{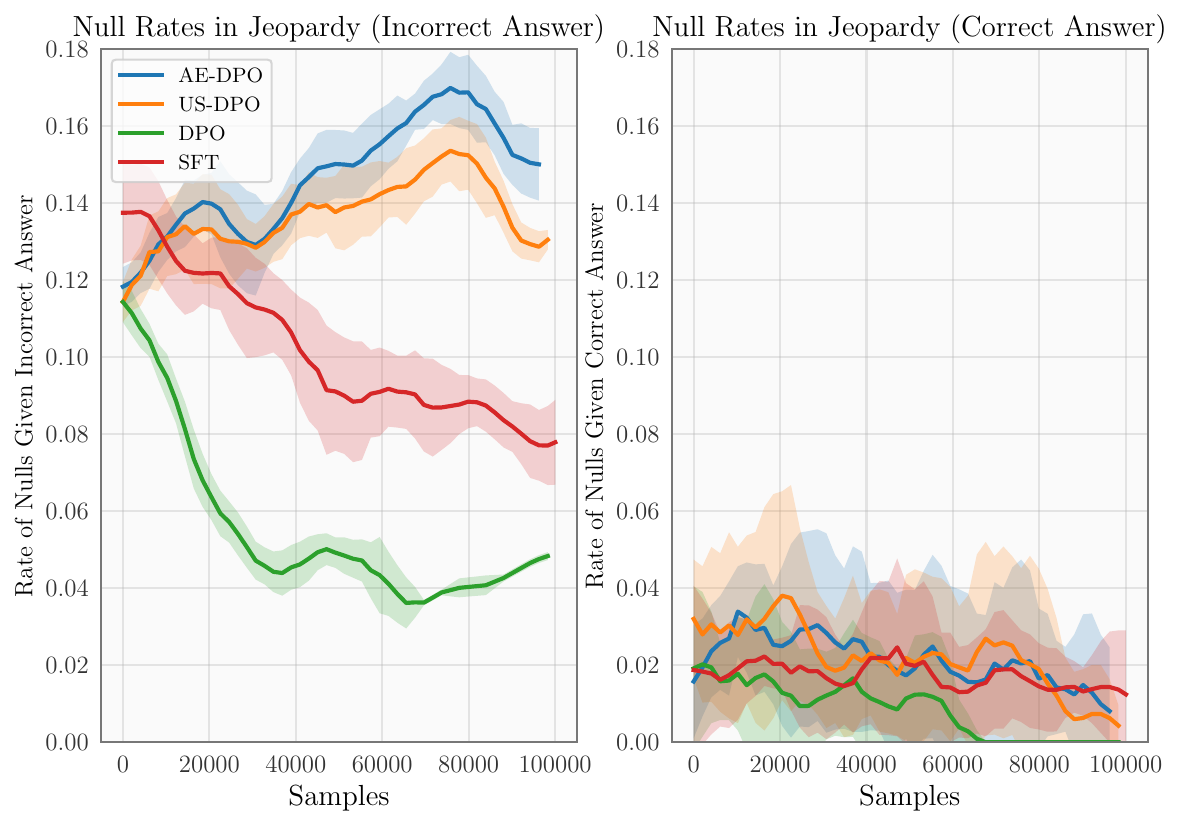}
    \caption{\small From left: \add{smoothed} win rates \add{against preferred choices in dataset} of samples generated from each policy at end of RLHF training runs across the final four evaluations, and all seeds, on the HH (first) and SHP (second) datasets. In the latter two plots, we force each policy to generate a (non-null) answer, and then, conditional on the answer being correct (fourth) or incorrect (third), plot the rate at which each policy abstains.}
    \label{fig:llm_expts}
\end{figure*}

For Jeopardy! we also plot the correctness of the policy over time in Figure~\ref{fig:jeopardy_correct} which shows that no model substantially learns new factual information.
Though this is part of the goal of the agent in the Jeopardy!\ dataset, note that it is not the entire optimization objective, as we show in Figure~\ref{fig:llm_expts}. Here, it is clear that no policy is able to improve at predicting correct answers on the test set. This is unsurprising as trivia is a difficult generalization problem.
\begin{figure}[h!]
    \centering
    \includegraphics[width=0.7\textwidth]{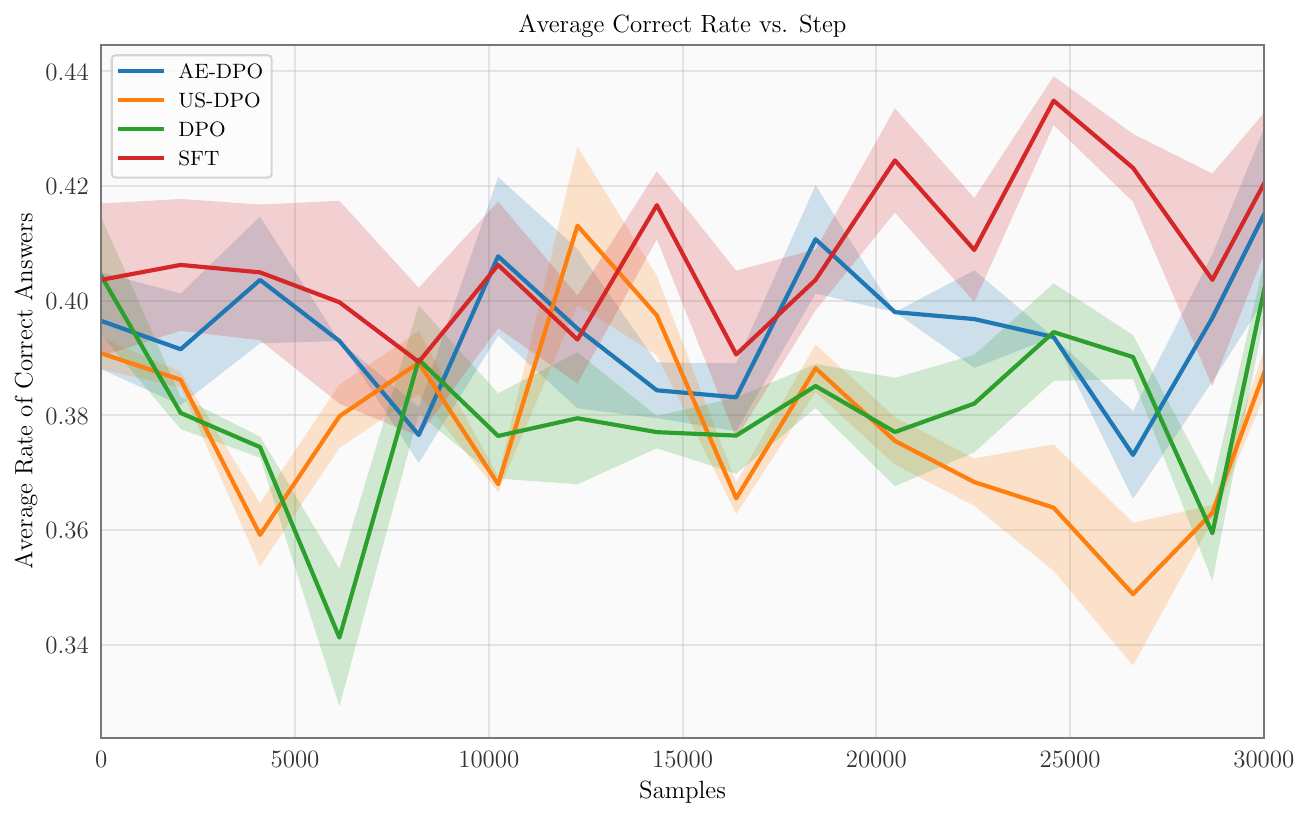}
    \caption{Rate of correct answers for Jeopardy!\ over time.}
    \label{fig:jeopardy_correct}
\end{figure}

\section{The Jeopardy! preference dataset}
\label{a:jeopardy}

We generated a set of plausible wrong answers for the Jeopardy! dataset from Huggingface \citep{jeopardy_huggingface} by asking GPT-3.5 for a plausible wrong answer given the question, category, and answer. We found that both the category and correct answer were necessary to include to direct GPT-3.5 to generate an answer which was appropriate for the category and to prevent it from accidentally generating a correct answer. We give the prompt used for this process in Figure~\ref{fig:jeopardy_prompt}.

\begin{figure}
\begin{tcolorbox}[colback=white]
\begin{lstlisting}[breaklines, breakindent=0pt, basicstyle=\ttfamily\footnotesize]
[System]
You are an assistant coming up with plausible but incorrect answers to Jeopardy questions (just the answer, no "what is"). Here's an example:\n
Q: 'For the last 8 years of his life, Galileo was under house arrest for espousing this man's theory'
Category: HISTORY
Correct Answer: Copernicus\n
Response: Brahe
[User]
Q: {question}
Category: {category}
Correct Answer: {answer}
Response:
\end{lstlisting}
\end{tcolorbox}
\caption{The prompt used to collect plausible wrong answers for Jeopardy! questions.}
\label{fig:jeopardy_prompt}
\end{figure}

\section{Prompt templates}
\label{a:eval_prompt}
The prompt templates for GPT-4 as the pairwise comparison evaluation judge and GPT-3.5 as the Jeopardy!\ single answer correctness judge are listed in Figures \ref{fig:eval_prompt_compare} and \ref{fig:eval_prompt_jeo}. We maintain the standardized prompts proved to be effective by \citet{zheng2023judging}.

\begin{figure}
\begin{tcolorbox}[colback=white]
\begin{lstlisting}[breaklines, breakindent=0pt, basicstyle=\ttfamily\footnotesize]
[System]
Please act as an impartial judge and evaluate the quality of the responses provided by two AI assistants to the user question displayed below. You should choose the assistant that follows the user's instructions and answers the user's question better. Your evaluation should consider factors such as the helpfulness, relevance, accuracy, depth, creativity, and level of detail of their responses. Avoid any position biases and ensure that the order in which the responses were presented does not influence your decision. Do not allow the length of the responses to influence your evaluation. Do not favor certain names of the assistants. Be as objective as possible. Output your final verdict by strictly following this format: 'A' if assistant A is better, 'B' if assistant B is better, and 'C' for a tie. Output only that character and do not include any other characters or spaces.

[User Question]
{question}

[The Start of Assistant A's Answer]
{answer_a}
[The End of Assistant A's Answer]

[The Start of Assistant B's Answer]
{answer_b}
[The End of Assistant B's Answer]
\end{lstlisting}
\end{tcolorbox}
\caption{The default prompt for pairwise comparison.}
\label{fig:eval_prompt_compare}
\end{figure}

\begin{figure}
\begin{tcolorbox}[colback=white]
\begin{lstlisting}[breaklines, breakindent=0pt, basicstyle=\ttfamily\footnotesize]
[System]
You are a judge on whether a contestant answer to Jeopardy is correct given a correct answer. If you don't see the correct answer it is not correct. Answer 'Yes' or 'No' is sufficient. Please don't use any other words.

[The Start of Correct Answer]
{correct_answer}
[The End of Correct Answer]

[The Start of Contestant Answer]
{contestant_answer}
[The End of Contestant Answer]
\end{lstlisting}
\end{tcolorbox}
\caption{The default prompt for evaluating single Jeopardy!\ answer.}
\label{fig:eval_prompt_jeo}
\end{figure}

\section{Additional Experiment Details}
\label{a:additional_details}
\subsection{Experiment details for AE-Borda-DPO evaluation}
We train our initial SFT models for 1 epoch for all datasets.
We use the models at their full capacity without any quantization. 
Further, we use dropout probability of $p=0.05$, policy constraint weight $\gamma=0.1$, an uncertainty bonus $\beta = 4$, a learning rate of $5 \times 10^{-7}$, an unlabeled batch size of 96, and a training batch size $b$ of 32. \add{Our implementation was built atop the one provided by the authors of the DPO paper \citep{dpo}.}

\subsection{Experiment details for AE-DPO evaluation}
We train our initial SFT models for 1 epoch on the SHP and HH dataset and 2 epochs on the new Jeopardy!\ dataset.
We select the initial training period based on the amount of training after which we obtained a validation loss which had plateaued.
We also find it reasonable to add a dropout layer before the penultimate linear layer since we find that adding a dropout layer not to negatively affect the performance in the SFT phase.
To aid in fitting the model on our GPUs, we use QLoRa \citep{lora, qlora} with 4bit quantization for model weights and optimize using the 8-bit Lion optimizer \citep{chen2023symbolic}.
For the methods with a reference model, we put the policy and the reference model on two separate GPUs. Further, we use dropout probability of $p=0.05$, policy constraint weight $\gamma=0.1$, an uncertainty bonus $\beta = 4$, a learning rate of $5 \times 10^{-7}$, an unlabeled batch size of 128, and a training batch size $b$ of 32.

\add{
\subsection{Evaluating dropout-based LLM uncertainty estimation}
}
\label{s:uncertainty}

We believe that in general the estimation of uncertainty for LLMs is an important topic of research and progress there will facilitate a more efficient and informed use of this technology. As we discussed in \cref{a:uncertaintyllms} and \cref{s:llm}, we use a dropout-based uncertainty estimation technique to inform the active exploration in this work. Over the course of this study, we considered ensembles and epistemic networks \citep{osband2022fine} as alternative methods for estimating the uncertainty of LLMs. However, each of these methods comes with some additional GPU memory requirement. For epistemic networks, the additional network parameters take GPU memory, while for ensembles, the memory is required to store multiple copies of a network or at least mutiple LoRAs. In our initial studies we found epistemic networks and dropout to perform comparably well and therefore chose dropout due to its smaller memory consumption and good performance.
In this section, we explore whether the uncertainties predicted by our estimates differ when the model predicts the correct, incorrect, or null answer and whether these predictions differ in the cases when the model decides to predict null. 
To do this, we evaluated the log probabilities predicted by $\sftpolicy$ on a test set of 20,560 Jeopardy! clues for the correct, incorrect, and null answer. 
We computed the sample variances over the log probabilities $\sigma^2(a\mid x) = \sum_{t_i \in a} \sigma^2(t_i\mid x, t_1, \dots, t_{i - 1})$
and plotted their densities in \cref{fig:llm-uncertainty}.

We see that the model predicts the highest variances for the log probabilities of incorrect answers.
We also see that the the model seems to predict especially low variances for the null token when it decides to output it.
The correct answer seems to have a lower variance when the model is willing to predict an answer. 
We see that the log probabilities of incorrect answers always have a high variance, indicating high uncertainty. We also see that the null token has a low variance when the model has a non-null output indicating certainty that it should not abstain. The variance further drops when it outputs null, indicating certainty about not knowing an answer. The correct answer has a lower variance than the incorrect answer when the model does not abstain. The relative variances of these two curves support that the model provides meaningful indications of uncertainty. Additionally, in the case where the model abstains, even the correct answer has a high variance, indicating a high uncertainty.
We believe that these results support that the uncertainty function is at least correlated with the model's knowledge about the input.
This offers support to the hypothesis that our estimates of the variance are somewhat meaningful.
However, we believe that this is an important research topic and warrants substantial further study under a variety of lenses.
We hope that this work will encourage further research in this area.
\begin{figure}
    \centering
\includegraphics[width=\textwidth]{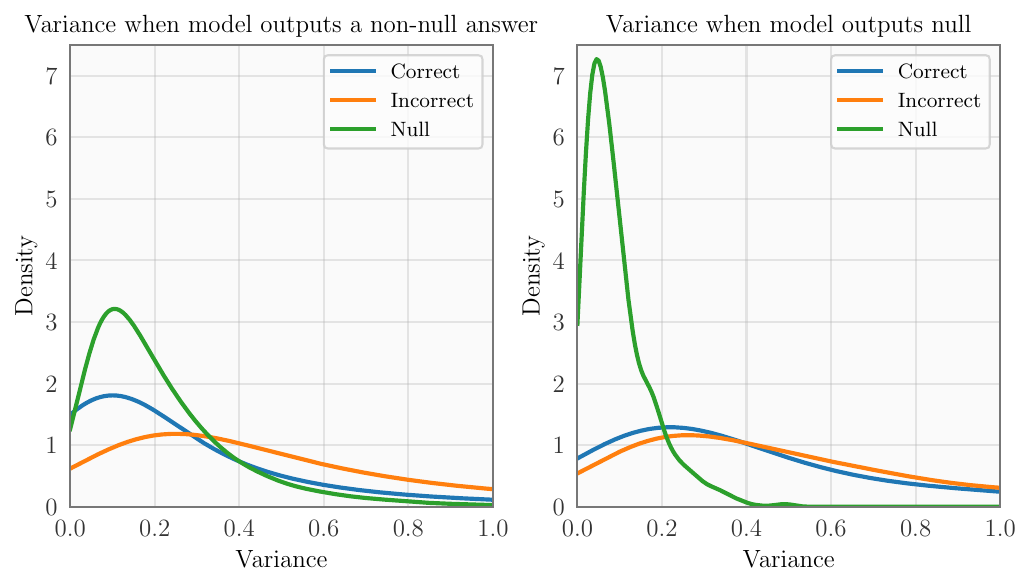}
\caption{Density of $\sigma(a\mid x)$ conditioned on correct, incorrect, and null values for $a$. The left hand plot depicts the variance distributions conditional on the model outputing a non-null completion, while the right hand is conditional on a null completion.}
    \label{fig:llm-uncertainty}
\end{figure}

\end{document}